%% file: main.tex
\documentclass{article}

\usepackage{tikz} 
\usetikzlibrary{arrows.meta}
\usetikzlibrary{positioning}
\usetikzlibrary{decorations.pathmorphing}
\usetikzlibrary{decorations.pathreplacing}
\usepackage[utf8]{inputenc}
 \usepackage[preprint]{neurips_2025}
 \usepackage{tcolorbox}
 \usepackage{paralist}

\usepackage[utf8]{inputenc} %
\usepackage[T1]{fontenc}    %
\usepackage{url}            %
\usepackage{booktabs}       %
\usepackage{amsfonts}       %
\usepackage{nicefrac}       %
\usepackage{microtype}      %
\usepackage{xcolor}         %
\newcommand{\best}[1]{\makebox[0pt][l]{\textbf{\color{blue}#1}}\hphantom{#1}}
\newcommand{\secondbest}[1]{\makebox[0pt][l]{\underline{\color{green!50!black}#1}}\hphantom{#1}}
\newcommand{\pmv}[2]{#1$_{\scriptsize\pm\,#2}$}

\usepackage{algorithm}
\usepackage{algorithmic}
\usepackage{amssymb,amsmath,amsthm,enumitem}
\usepackage{amsfonts} 
\usepackage{bbm}

\usepackage{tikz}
\usetikzlibrary{arrows.meta}
\usetikzlibrary{bending}

\usepackage{hyperref}  
\usepackage{xspace}
\newcommand{\method}{{\sc PIQL}\xspace}

\input{dfn}

\usepackage{graphicx}
\usepackage{subcaption}
\usepackage[table]{xcolor}

\usepackage[capitalize,noabbrev]{cleveref}

\theoremstyle{plain}
\newtheorem{theorem}{Theorem}[section]

\theoremstyle{definition}

\theoremstyle{remark}

\title{Toward Privileged Foundation Models: \\LUPI for Accelerated and Improved Learning}

\author{%
  Xueying Ding \\
  Carnegie Mellon University\\
  \texttt{xding2@cs.cmu.edu} \\
  \And
  Leman Akoglu \\
  Carnegie Mellon University \\
  \texttt{lakoglu@cs.cmu.edu} \\
}

\begin{document}

\maketitle

\begin{abstract}
\input{00abstract.tex}

\end{abstract}

\vspace{-0.15in}
\section{Introduction}
\vspace{-0.05in}
\label{sec:intro}
\input{01intro}

\vspace{-0.15in}
\section{Preliminaries}
\vspace{-0.05in}
\label{sec:prelim}
\input{02prelim}

\vspace{-0.1in}
\section{A Proof-of-Concept for Privileged TFMs}
\vspace{-0.05in}
\label{sec:poc}

\input{025poc}

\vspace{-0.1in}
\section{PIQL: Privileged Information for Quick and Quality Learning for TFMs}
\vspace{-0.05in}
\label{sec:proposed}

\input{03pi}

\vspace{-0.075in}
\subsection{Designing and Training Privileged TFM Architectures}
\vspace{-0.05in}
\label{sec:design}
\input{03arch}

\vspace{-0.1in}
\section{Theoretical Analysis}
\vspace{-0.075in}
\label{sec:theory}

\input{04theoryshort}

\vspace{-0.1in}
\section{Experiments}
\vspace{-0.1in}
\label{sec:experiments}

\input{05experiments}

\vspace{-0.1in}
\section{Conclusion}
\vspace{-0.075in}
\label{sec:conclusion}

\input{07conclusion}

\clearpage
\newpage

\bibliographystyle{plain}
\bibliography{00refs}

\clearpage

\appendix
\section*{Appendix}
\section*{Broader Impact and Limitations}
\vspace{-0.05in}
\input{impactlimitations.tex}

\input{08appendix}

\end{document}

%% file: dfn.tex
\newtheorem{problem}{Problem}

\newcommand{\DX}{\mathcal{D}_X}

\makeatletter
\newcommand\footnoteref[1]{\protected@xdef\@thefnmark{\ref{#1}}\@footnotemark}
\makeatother

\newcommand{\cbit}{\begin{compactitem}}
\newcommand{\ceit}{\end{compactitem}}
\newcommand{\cben}{\begin{compactenum}}
\newcommand{\ceen}{\end{compactenum}}

\newcommand{\beq}{\begin{equation}}
	\newcommand{\eeq}{\end{equation}}

\newcommand{\fomo}{{\sc FoMo-0D}\xspace}

\definecolor{darkgreen}{RGB}{41,166,41}

\newcommand{\bit}{\begin{itemize}}
	\newcommand{\eit}{\end{itemize}}
\newcommand{\ben}{\begin{enumerate}}
	\newcommand{\een}{\end{enumerate}}

\newcounter{x}

\newcommand{\bX}{\mathbf{X}}

\newcommand{\bPs}{\mathbf{P}^\ast}
\newcommand{\bPsi}{\mathbf{P}^{\ast(i)}}

\newcommand{\hatbPsu}{\widehat{\mathbf{P}}_u^\ast}

\newcommand{\bx}{\mathbf{x}}
\newcommand{\by}{\mathbf{y}}
\newcommand{\bs}{\mathbf{s}}
\newcommand{\bz}{\mathbf{z}}
\newcommand{\bp}{\mathbf{p}}

\newcommand{\mcD}{\mathcal{D}}

\definecolor{celadon}{rgb}{0.67, 0.88, 0.69}
\definecolor{carolinablue}{rgb}{0.6, 0.73, 0.89}

\newcommand{\bmu}{\boldsymbol{\mu}}
\newcommand{\bsigma}{\boldsymbol{\sigma}}
\newcommand{\bSigma}{\mathbf{\Sigma}}

\newcommand{\Dtr}{\mathcal{D}_{\rm train}}

\newcommand{\Dts}{\mathcal{D}_{\rm test}}

\newcommand{\Xtr}{\mathbf{X}_{\rm tr}}

\newcommand{\Xts}{\mathbf{X}_{\rm test}}

\newcommand{\Xc}{\mathbf{X}_{\rm c}}

\newcommand{\Xq}{\mathbf{X}_{\rm q}}

\newcommand{\Zsu}{\mathbf{Z}^{\ast}_{\rm u}}
\newcommand{\hatZsu}{\widehat{\mathbf{Z}}^{\ast}_{\rm u}}

\newcommand{\ytr}{\mathbf{y}_{\rm tr}}
\newcommand{\yc}{\mathbf{y}_{\rm c}}
\newcommand{\yq}{\mathbf{y}_{\rm q}}
\newcommand{\hatyts}{\widehat{\mathbf{y}}_{\rm test}}

\definecolor{aliceblue}{rgb}{0.867, 0.917, 0.964}
\definecolor{aliceyellow}{rgb}{0.999, 0.945, 0.796}
\definecolor{alicegray}{rgb}{0.844, 0.867, 0.898}

%% file: 00abstract.tex
Training foundation models is computationally intensive and often slow to converge.
We introduce \method\footnote{\label{fn:url}All source code and model checkpoints are open at \url{https://anonymous.4open.science/r/PIQL}\vspace{-0.1in}}, Privileged Information for Quick and Quality Learning, the first framework to systematically integrate privileged information (PI) 
to simultaneously accelerate learning and improve generalization in tabular foundation models (TFMs). We construct two complementary forms of PI: (i) aggregate dataset-level statistics that reduce the burden on in-context learning, and (ii) encodings of the underlying data-generating program, providing knowledge beyond observable data. 
We further design an architecture that effectively transfers the train-time-only PI by learning to reconstruct it from observed context at inference.
We provide a theoretical analysis characterizing conditions under which PI reduces the population-level approximation gap and accelerates convergence in finite-data regimes.
Empirical evidence shows that \method enables TFMs to achieve faster convergence, lower final loss, and better generalization, in effect, reducing data and compute requirements.
Our work establishes PI-guided pretraining as a principled and practical paradigm for improving the efficiency and performance of foundation models.

%% file: 01intro.tex
Training foundation models (FMs) poses a formidable learning problem. The combination of massive, heterogeneous datasets and high-dimensional parameter spaces substantially slow down convergence, often requiring weeks of intensive computation to achieve robust generalization.

Motivated by the insight that effective guidance can dramatically improve learning efficiency, as captured by the Japanese proverb, ``\textit{Better than a thousand days of self-instruction is one day with a great teacher}'' \citep{vapnik2015blearning}, we investigate a \textbf{novel concept} for training FMs:
incorporating a ``teacher'' that provides high-utility auxiliary information during learning.
Our focus is on tabular foundation models (TFMs), though the underlying principles extend naturally to other modalities.

Our goal for teacher-guided TFMs is twofold:  \textbf{(1)} to accelerate convergence (i.e., learning rate), thereby reducing pretraining cost; and \textbf{(2)} to improve asymptotic generalization, which is central to downstream TFM performance on new tasks.
To this end, we explore a \textbf{new paradigm for learning tabular foundation models} in which additional information is provided to the learner during training.

This raises a key question: 
\textit{Does there exist auxiliary information that can systematically improve both learning speed and generalization?} We posit two complementary mechanisms for constructing such information:  \textbf{(i)} targeting the in-context learning (ICL) capacity of the backbone Transformer, and \textbf{(ii)} leveraging prior knowledge of the data-generating process.

\input{FIG/prelim}

First, TFMs perform ICL by processing training examples as context and predicting labels for test points that cross-attend to the context in a single forward pass. Prior work suggests that
each layer of ICL implements a distinct computational step, contributing incrementally to the final prediction \cite{garg2022transformers, von2023transformers}. %
Motivated by this, we introduce \textit{aggregate dataset-level statistics} as auxiliary inputs, allowing the model to bypass reconstructing global properties from context and thereby increasing the model's effective depth by freeing ICL capacity.
Second, unlike language models pretrained on real-world corpora, TFMs are pretrained on synthetic datasets sampled from known data priors \citep{hollmann2023tabpfn,hollmannv2,shen2025fomod,zhang2025mitra,ding2026outformer}.
That is, the true generating mechanism for each dataset is fully known during pretraining. 
While typical TFM pretraining discards this generative knowledge after sampling, we instead 
supplement the model with this \textit{generative information}, thereby providing knowledge about the underlying process
beyond what can be inferred from observed data alone.

Fig. \ref{fig:prelim} presents striking results from a %
proof-of-concept study (Sec. \ref{sec:poc}), demonstrating the effectiveness of this new paradigm:  progressively richer forms of \textbf{teacher-guided  foundation models achieve faster learning and lower asymptotic loss, reducing both data and computational requirements}.
Further, they produce representations that capture  data distributional properties in earlier layers.

Our approach is grounded in the Learning Using Privileged Information (LUPI) framework \citep{vapnik2009new}, which formalizes learning with auxiliary information available only during training (only). While prior theory characterizes when the so-called privileged information (PI) can accelerate learning \cite{pechyony2010theory}, it does not provide constructive design principles. 
As a result, the design of effective PI remains a largely empirical quest\footnote{Empirical work demonstrated
compelling examples \cite{vapnik2009new}; where pairing handwritten digit images with poetic language descriptions as PI for digit prediction, and supplementing protein sequences with their 3D structure information as PI for protein-family classification improved learning rates.}
We address this gap by proposing concrete, novel forms of PI tailored to TFMs.
The following summarizes our main contributions.

\vspace{-0.05in}
\begin{itemize}[noitemsep, topsep=0pt, leftmargin=1em]
\item {\bf Novel Training Paradigm for TFMs:}
We introduce \method, %
leveraging Privileged Information for Quick and Quality Learning,  the first PI-guided framework for foundation model training toward both accelerated and improved learning.

\item {\bf Constructing Privileged Information (PI) for TFMs:}
Constructing effective PI remains an empirical challenge. We design two novel forms of PI for TFMs: 
 \textbf{(i)} aggregate dataset-level statistics that alleviate the demand on the model's ICL capacity (available during both training and inference), and \textbf{(ii)} encoding of the underlying data-generating program that provides guidance beyond what can be inferred from in-context observations alone (train-time-only). 
\item {\bf Designing Privileged TFM Architecture:}
We design a new  architecture that transfers the train-time-only PI (program encoding) by learning to reconstruct it from observable input during inference, trained with annealed teacher forcing, gradually replacing the original with estimated PI.

\item {\bf Theoretical Analysis:}
We establish the theoretical potential of \method for TFMs by characterizing the conditions under which PI reduces the population-level approximation gap between standard and PI-augmented hypothesis classes. We further analyze finite-data regimes, quantifying the rate at which models realize PI-induced gains in practice.

\item {\bf Empirical Effectiveness:}
 We support our theoretical analysis under idealized assumptions with empirical evidence, showing that our proposed PI mechanisms and architecture for TFMs yield faster convergence, lower final loss, and improved generalization.
\end{itemize}

\vspace{-0.1in}

%% file: FIG/prelim.tex
\definecolor{lightblue}{RGB}{173,216,230}
\definecolor{lightgreen}{RGB}{209,255,189}

\begin{figure}[!t]
\vspace{-0.1in}
\centering
\begin{subfigure}[t]{0.74\textwidth}
    \vspace{0pt}
    \centering  \includegraphics[width=\linewidth]{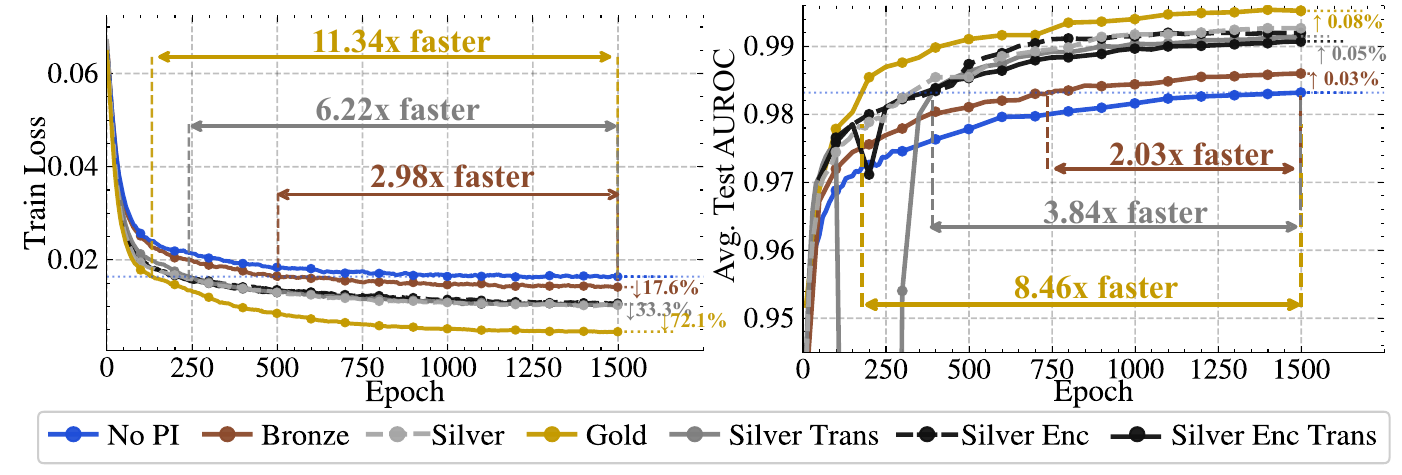}
    \label{fig:training_auroc_prelim}
\end{subfigure}
\hfill
\begin{subfigure}[t]{0.25\textwidth}
    \vspace{0pt}
    \centering
    \scriptsize
    \setlength{\tabcolsep}{2.5pt}
    {\renewcommand{\arraystretch}{0.74}
    \resizebox{0.92\linewidth}{!}{%
    \begin{tabular}{ccc|cp{0.5cm}}
    \toprule
    & \multicolumn{2}{c}{Prior Type} & \multicolumn{2}{c}{GMM \#Clus.} \\
    \cmidrule(lr){2-3} \cmidrule(lr){4-5}
    
L & No-PI & \method & No-PI & \method \\
    \midrule
    9 & \cellcolor{lightgreen}0.789 & 0.996 & 0.609 & 0.655 \\
    8 & 0.788 & 0.985 & 0.606 & 0.625 \\
    7 & 0.768 & 0.981 & \cellcolor{lightgreen}0.547 & 0.556 \\
    6 & 0.750 & 0.976 & 0.502 & 0.536 \\
    5 & 0.748 & 0.975 & 0.472 & 0.537 \\
    4 & \cellcolor{lightblue}0.739 & 0.976 & 0.465 & 0.569 \\
    3 & 0.734 & 0.981 & \cellcolor{lightblue}0.450 & 0.542 \\
    2 & 0.727 & 0.977 & 0.418 & 0.564 \\
    1 & 0.713 & \cellcolor{lightgreen}0.973 & 0.398 & \cellcolor{lightgreen}0.554 \\
    0 & 0.687 & \cellcolor{lightblue}0.740 & 0.363 & \cellcolor{lightblue}0.457 \\
    \bottomrule
    \end{tabular}%
    }}
    \label{tab:layerwise_acc}
\end{subfigure}
\vspace{-0.25in}
\caption{Privileged TFMs pretrained on a GMM-only data prior and equipped with bronze- (empirical dataset-level aggregate statistics), silver- (theoretical model-wise means and variances), and gold-grade (theoretical point-wise mean and variance) PI outperform the baseline (training without PI): \textbf{(left)} they \textit{learn faster and reach lower asymptotic loss}, with progressively larger gains from bronze to gold relative to the no-PI baseline, and \textbf{(middle)} they achieve strong in-distribution generalization earlier, \textit{requiring fewer epochs, thus less data and compute} to reach proficiency. \textbf{(right)} PI \textit{increases effective depth by freeing ICL capacity}: a \method'ed  10-layer TFM trained on a mixture of distinct priors (GMM, SCM-S\&-M, Copula-D\&-P \cite{ding2026outformer}) has linearly probe-able representations for prior type as well as GMM cluster count prediction in \textit{earlier layers} than a no-\method counterpart. (best in color)}
\label{fig:prelim}
\vspace{-0.175in}
\end{figure}

%% file: 02prelim.tex
\subsection{Learning Using Privileged Information (LUPI)}
\label{ssec:lupi} 
\vspace{-0.05in}

The LUPI paradigm \citep{vapnik2009new} departs from the classical learning setting by introducing a ``teacher'' who provides to the learner  \textbf{additional information
during training that is not available at inference time}. Specifically, the learner receives \textit{triplets} $\{(\bx_1, \bx_1^\ast, y_1), \dots, (\bx_n, \bx_n^\ast, y_n)\}$, where $\bx \in \mathcal{X}$ is the primary input \textbf{given by nature}, and $\bx^\ast\in \mathcal{X}^\ast$ is the privileged information (PI) \textbf{constructed by the teacher}. %
In principle, PI serves as ``correcting'' information that accelerates learning by shaping the model's notion of sample difficulty and error structure, and guiding the learner toward better decision boundaries with improved generalization.
The {hypothesis spaces} of the decision (learner) and correcting (teacher) functions, respectively denoted $\mathcal{H}$ and $\Phi$, are \textbf{chosen by the learner}.

While classical supervised learning has been theoretically shown to often converge at a rate of $O(n^{-1/2})$, a teacher that explains structure in the data via PI can accelerate learning to a faster rate of $O(n^{-1})$ under certain conditions \citep{pechyony2010theory};  effectively reducing the complexity of the learning problem.

The utility of the LUPI paradigm
has been demonstrated 
through compelling empirical use cases where PI provides high-level structural context \citep{vapnik2009new,vapnik2017knowledge}. %
In protein function prediction, amino acid sequences (primary input) were supplied with \textbf{3D molecular coordinates} (PI) where  this additional geometric structure %
significantly improved the  performance of sequence-only models. 
In MNIST dataset, hand-written digit images (primary input) were supplemented with short \textbf{human-written poetic descriptions} (PI) capturing stylistic aspects of digits (such as stroke dynamics or visual rhythm) , enabling the learner to more efficiently identify structural invariants among digits. 
In medical diagnosis, 
\textbf{medical-expert generated descriptions and biopsy interpretations} (PI),  complemented
 raw biometric measurements (primary input), leading to improved diagnostic models.

\vspace{-0.075in}
\subsection{Tabular Foundation Models (TFMs)}
\label{ssec:tfms}
\vspace{-0.05in}

TFMs bring the foundation model paradigm to tabular data by pretraining on many \textit{synthetic} tasks sampled from various data-generating priors, including 
Structural Causal Models (SCMs) \citep{hollmann2023tabpfn,hollmannv2}, Gaussian Mixture Models (GMMs), and Copulas \citep{shen2025fomod,ding2026outformer}.
 They learn amortized Bayesian inference,  approximating the posterior predictive distribution across tasks. At inference, a TFM conditions on a labeled training dataset as context and predicts labels for query points in a \textit{single forward pass}, functioning as a pretrained inference mechanism that generalizes through in-context learning \citep{xie2021explanation,pmlr24icl}.

\textbf{Posterior Predictive Distribution (PPD):}
In Bayesian supervised learning, a prior defines a family of data-generating processes $\Psi$. The PPD gives the distribution of labels for a new input $\bx_{\rm test}$ given training data $\Dtr = \{(\bx_1, y_1), \dots, (\bx_n, y_n)\}$. It marginalizes over all hypotheses:
\vspace{-0.065in}
\begin{equation}
    p(y_{\rm test} \mid \bx_{\rm test}, \Dtr) 
    = \int_{\Psi} p(y_{\rm test} \mid \bx_{\rm test}, \psi)\, p(\Dtr \mid \psi)\, p(\psi)\, d\psi.
\label{eq:ppd}
\end{equation}

\vspace{-0.075in}
\textbf{PFNs for PPD approximation:}
Exactly computing the PPD is generally intractable. Prior-data Fitted Networks (PFNs) approximate it by training on synthetic datasets sampled from a prior \citep{muller22}. TFMs exhibit two phases: pretraining and inference.

During \textbf{pretraining}, a hypothesis $\psi \sim p(\psi)$ is sampled and used to generate a dataset $\mcD \sim p(\mcD \mid \psi)$. This dataset is split into a context set $\Dtr$ and targets $\Dts$. A PFN $h_{\boldsymbol{\theta}}$ is trained to predict the labels of test points $(\bx_{\rm test}, y_{\rm test}) \in \Dts$ given $\Dtr$ by minimizing the expected KL divergence between the true predictive distribution and the model \citep{muller22}. 
\begin{equation}
\mathcal{L} 
    = \mathbb{E}_{(\{(\bx_{\rm test}, y_{\rm test})\} \cup \Dtr) \sim p(\mcD)} 
    \big[-\log h_{\boldsymbol{\theta}}(y_{\rm test} \mid \bx_{\rm test}, \Dtr)\big].
 \label{eq:lpfn}
\end{equation}
 In practice, PFNs are implemented with Transformers \citep{VaswaniSPUJGKP17}, where $\Dtr$ serves as context and $\Dts$ as queries, with predictions computed via cross-attention from queries to context.

During \textbf{inference}, the pretrained PFN parameters are fixed. Given a new dataset $\Dtr$, it outputs $h_{\boldsymbol{\theta}}(\cdot \mid \bx_{\rm test}, \Dtr)$ for test inputs in a single forward pass, avoiding dataset-specific training and solely relying on in-context learning. %

\vspace{-0.075in}
\subsection{Notation and Problem Statement}
\vspace{-0.075in}

The theory of LUPI does not offer constructive principles for designing privileged information (PI), therefore the quest for effective PI remains largely empirical (e.g. poetic descriptions of images, expert-annotations of lab results, etc.).
As such, the first problem we address is how to construct useful PI, denoted $\bPs$, for training TFMs. 

\begin{problem}[\textbf{PI Construction}]
Given tabular pretraining datasets $\DX = (\bX,\by)$ drawn from various data priors;  Construct privileged tokens $\bPs$ to prepend to the context window, where $\bPs = (\bPs_a\|\bPs_u)$ is composed of $\bPs_a$ tokens available during both pretraining and inference, and $\bPs_u$ that is available for training but unavailable at inference. 
\end{problem}
\vspace{-0.1in}

The second problem is how to effectively train with the PI, and in particular, how to leverage $\bPs_u$ that is only available during training.

\vspace{-0.05in}
\begin{problem}[\textbf{PI Transfer}]
Given the pretraining corpus $\{\big( \bPsi, \Xc^{(i)}, \yc^{(i)} \;|\; \Xq^{(i)} \big)\}_{i=1}^N$; during training: 1.  Estimate teacher $\phi \in \Phi: \big( \bPs_u, \bPs_a, \Xc, \yc \;|\;  \Xq \big)  \mapsto \yq$
that ingests  privileged context-query pairs and maps to query labels, and 
2. %
Estimate transfer model $s : \big( \bPs_a, \Xc, \yc \big)  \mapsto \Zsu$  that ingests inference-available context-query pairs and maps to  intermediate teacher representations  $\Zsu$ of $\bPs_u$, i.e. inference-unavailable privileged context.
\end{problem}
\vspace{-0.1in}

Note that we jointly optimize the teacher $\phi$ and the transfer model $s$ end-to-end during pretraining, gradually transitioning $\phi$ to $h$ that replaces the true $\bPs_u$ with the student-\textit{predicted} $\hatbPsu$.
At inference, the learner $(h,s) \in \mathcal{H}$ is operates on new train-test pairs using available PI only  by employing (1) $s$ on $\big( \bPs_a, \Xtr,  \ytr \big)$ to obtain $\hatZsu$ and then (2)  $h$  on $\big(\hatZsu, \bPs_a, \Xtr,  \ytr \;|\; \Xts \big)$ to predict $\hatyts$. 

\vspace{-0.075in}
\subsection{Related Work Overview}
\vspace{-0.075in}

Our work extends tabular foundation models (TFMs) with privileged information (PI). Specifically, we design and prepend the training points in the context window with PI tokens. As stated, it appears similar to other learning settings including \textbf{prefix tuning} \citep{li2021prefix}, \textbf{prompt optimization} \citep{lester2021power,chen2023unleashing}, \textbf{instruction tuning} \citep{wei2021finetuned,mishra2022cross}, \textbf{personalization} \citep{adomavicius2005toward,ricci2010introduction}, \textbf{conditional generation} \citep{mirza2014cgan,keskar2019ctrl,ramesh2021zero,rombach2022high}, and \textbf{prediction with auxiliary data} \citep{lim2021tft, zhang2025does,radford2021clip,kim2021vilt,li2023blip2}. 
Appdx. \ref{sec:related} provides a detailed discussion, and Table \ref{tab:relatedcomparison} highlights the key differences between our problem and these existing lines of work. 

In a nutshell, a unifying theme across these paradigms is the use of additional information beyond the raw input to improve learning. However, they differ fundamentally in the nature, availability, and role of this auxiliary information. In particular, \textbf{none of the aforementioned settings is designed for TFMs, nor do they operate under a \textit{train-time-only} information regime}. In contrast, our framework is grounded in the LUPI paradigm, where (part of) PI is available exclusively during training only and is not available at inference time. This distinction is critical: rather than serving as persistent conditioning input (as in personalization, conditional generation, or auxiliary-data learning), or as task-defining signals shared across train and inference (as in instruction tuning and prefix-based methods), our PI acts as a \textit{transient} learning signal that accelerates learning and improves generalization of TFMs. Our auxiliary information, i.e., the carefully-designed PI comprising dataset meta-features and generator program specifications, distinctly encodes  latent structure in the data distribution, enabling more efficient learning of tabular predictors under a foundation model paradigm.

%% file: 025poc.tex
In this section, we demonstrate the premise of leveraging privileged information (PI) in training TFMs using a simple scenario. 
We consider a TFM for outlier detection, namely \fomo \citep{shen2025fomod}, which is trained on datasets synthesized solely from a data prior based on Gaussian Mixture Models (GMMs). 
We construct three different PI of varying strength, so-called \textbf{bronze}-, \textbf{silver}- and \textbf{gold}-grade PI, and show their impact on training loss as well as generalization performance in-distribution, i.e. on hold-out GMM datasets from the data-generating prior.

For each pretraining dataset, inliers are drawn from a GMM with up to $M$$=$$5$ components in up to $D$$=$$100$ dimensions, with centroids $\bmu_{j}^{(k)}$$\in$$[-5,5]$ and diagonal covariance $\bSigma_{jj}^{(k)}$$\in$$(0, 5]$ for $k\in [M]$ and $j \in [D]$.
Outliers are drawn from the ``inflated'' GMM: each Gaussian's variance is scaled by $s'$ within a random subspace $\mathcal{S}^{(k)}$ as $s'\times \Sigma_{jj}^{(k)}$ for $j\in \mathcal{S}^{(k)}$ while keeping the centroids as before.

\vspace{-0.05in}
\begin{itemize}[noitemsep, topsep=0pt, leftmargin=1em]
\item \textbf{Bronze}-grade PI consists of \textbf{(empirical) aggregate statistics at the \textit{dataset-level}}. Specifically, we compute  first- to fourth-order moment statistics (mean, variance, skewness and kurtosis) of each feature across all points in $\Dtr \cup \Dts$ (without access to any labels). The four resulting tokens in $\mathbb{R}^D$ are prepended to the context along with the points in $\Dtr$.

\item \textbf{Silver}-grade PI directly provides the \textbf{(theoretical) mean and variance for \textit{each Gaussian}} that is used to generate inliers and outliers. Let $(\bmu, \bsigma_{\text{ in}}^{2})$ denote the parameters of a Gaussian that generates inliers, and $(\bmu, \bsigma_{\text{ out}}^{2})$ denote those of its inflated version that generates outliers. Then, we prepend the context with two tokens in $\mathbb{R}^D$ per Gaussian of the form $(\bmu, 1/2\bsigma_{\text{ in}}^{2}+1/2\bsigma_{\text{ out}}^{2})$.

\item \textbf{Gold}-grade PI provides the \textbf{(theoretical) mean and variance vector for \textit{each individual point}} from which they were drawn. Specifically, we concatenate each inlier point $\bx_i \sim \mathcal{N}(\bmu^{(k)}, \bsigma^{2,{(k)}}_{\text{ in}})$  as  $(\bx_i \| \bmu^{(k)} \| \bsigma^{2,{(k)}}_{\text{ in}})\in\mathbb{R}^{3D}$. Outliers are also concatenated with the \textit{original} parameters of their respective Gaussian (and \textit{not} the inflated variances).
\end{itemize}
\vspace{-0.075in}

\textbf{Remarks:~} Note that the strength of the provided PI increases with the grade. Bronze-grade PI offers aggregate, empirical feature moments across all points in a dataset. This information can also be computed on a new dataset at inference time. 
Silver-grade PI provides the theoretical generative parameters of the underlying GMM directly. This PI is only available at pretraining time for synthetic datasets, and is hard to infer on a new dataset at inference time; GMM parameter estimation is often done with the Expectation-Maximization algorithm \citep{dempster1977em}, which is iterative (i.e., computationally demanding) and not guaranteed to infer the exact theoretical parameters. Datasets drawn from a GMM exhibit two types of latents: latent parameters of the underlying Gaussians and the latent membership of individual points to the Gaussians. Silver-grade PI offers the first set of latents. Gold-grade PI enhances it by providing the point-level latents, i.e. the (hard) memberships and more specifically, the parameters of the Gaussian that each point was sampled from. 

\vspace{-0.05in}
\textbf{Transferring PI:~}
As both silver- and gold-grade PI provides theoretical GMM latents, they are only available at pretraining time (but hard if not impossible to infer at inference time due to limited depth and data). Therefore, we design a novel Transformer architecture that learns to transfer or reconstruct the PI from available representations at inference time. We defer the details of our proposed privileged TFM architecture and the training algorithm to Sec. \ref{sec:proposed} and we focus on initial but compelling results that showcase the prowess of PI even in this simplified setting of GMM-only data priors.

\vspace{-0.05in}
\textbf{Results:~} Fig. \ref{fig:prelim} presents the (left) training loss curves and (right) average AUROC generalization performance over in-distribution datasets over training epochs,  for the baseline without any PI (blue) and models trained, respectively,  with bronze- (brown), silver- (dashed gray) and gold-grade (yellow) PI. Note that for the train-time-only silver- and gold-grade PI, we provide PI at inference time as a proof of concept. We also include our proposed model with \textit{transferred} silver-grade PI (gray), which reconstructs the PI at inference using only the available inputs. This variant achieves comparable performance, demonstrating the effectiveness of our architecture and its ability to learn to transfer the PI during pretraining.
Also shown are ($i$) the model using the generator program encoding as PI (silver\_enc; dashed black) instead of the prior-level parameters (silver; dashed gray) and ($ii$) the model with the transferred generator PI (black); both of which yield similar gains to those with silver-grade PI.  We defer the details of the program encoding to Sec. \ref{sec:proposed}.

Strikingly, \textbf{privileged training both accelerates loss reduction and yields lower asymptotic loss}, with improvements that grow from bronze to gold relative to the baseline. Moreover, privileged models attain strong in-distribution generalization earlier, requiring fewer training epochs and thus less data and compute. These results highlight the broader promise of privileged foundation models: improved training efficiency and convergence, translating into reduced cost and stronger generalization.

%% file: 03pi.tex
We consider learning under privileged information (PI) for tabular foundation models (TFMs), where the labeled training points in the context window are prepended with additional tokens capturing the PI, shown as follows.
\vspace{-0.45in}
\[
\tikzset{>={Stealth[length=5pt]}}
\begin{tikzpicture}[baseline=(context_all.base),>=Stealth]

 \node (priv_all) at (-4.5, 0) 
{$\displaystyle \underbrace{\bp^\ast_1, \dots, \bp^\ast_k}_{\text{Privileged Context Tokens}}$};

    \node (sep1) at (-2.2, 0) {$;$};

    \node (context_all) at (0,0) 
    {$\displaystyle \underbrace{(\bx_1, y_1), \dots, (\bx_n, y_n)}_{\text{(Labeled) Context Points}}$};

    \node (separator) at (3.1, 0) {$\mid$};

    \node (query_all) at (5.5, 0) 
    {$\displaystyle \underbrace{\bx_{n+1}, \dots, \bx_{n+q}}_{\text{(Unlabeled) Query Points}}$};

    \draw [-Stealth, black, thick] (5.0, 0.5) 
        to [out=165, in=15] 
        node [above=-1pt, font=\small] {Cross-Attention} (0.3, 0.5);

    \path (-0.8, 0.6) -- (-3.7, 0.6) 
    node [midway, above=10pt, font=\small] {Cross-Attn};

    \draw [-Stealth, black, thick] (5.8, 0.8)
        to [out=160, in=0]
        node [above=2pt, font=\small] {} (-3.7, 0.8);

\end{tikzpicture}
\]
\vspace{-0.15in}

We introduce two key innovations: (1) construction of two novel PIs for TFMs (Sec. \ref{sec:pi}), and (2) a new privileged TFM architecture and pretraining algorithm to effectively accommodate train-time-only PI (Sec. \ref{sec:design}).
This section presents how to \textit{operationalize} our innovations, followed by a theoretical analysis in the next section.

\vspace{-0.05in}
\subsection{Constructing Privileged Information for TFMs}
\vspace{-0.05in}
\label{sec:pi}

The LUPI framework does not specify constructive principles for designing PI, leaving the identification of effective PI largely an empirical quest. However, our preliminary results in the previous section suggest that not all PI is equally effective, with clear performance gaps across varying grades.

Toward PI-equipped TFMs, we design two complementary forms of additional information: \textbf{meta-features} and \textbf{generator programs}, distinguished by both \textit{when they are available} and \textit{what role they play}. \textbf{Meta-features} are available at both training and inference time, departing from the strict LUPI setting and instead serving as a practical augmentation. They expose high-level, aggregated statistics of the data directly \textit{in context}, reducing the burden on the TFM to infer such structure via in-context learning.
\textbf{Generator programs}, on the other hand, capitalize on the 
\textit{synthetic-data} regime and the unique setting of tabular foundation models, where the data-generating prior and its parameters are fully known at training time. Available only during training, this form of PI aligns with the original LUPI framework and provides direct access to the underlying true generative process, guiding representation learning and improving generalization beyond what can be achieved from in-context observations, i.e., finite samples from the generative process alone.

Aggregate data statistics (meta-features) can be straightforwardly injected as privileged prefix tokens. In contrast, generator programs require unifying representations across heterogeneous data priors. While Sec. \ref{sec:poc} examined theoretical means and variances as PI for the GMM prior, structural causal models (SCMs) and Copulas exhibit fundamentally different constructs. To enable a unified program prefix, we introduce a program encoder that maps a carefully designed textual–numeric specification of the generator into a sequence of learned representations that can be prepended to the context.

\vspace{-0.075in}
\subsubsection{Meta-Features (weak PI, available at test time)} 
\label{ssec:metapi}
\vspace{-0.05in}

\textbf{Meta-PI} refers to summary statistics or dataset-level aggregate features computed directly from the input dataset and available at both training and inference time. As such, they can be incorporated as standard model inputs. By explicitly providing these aggregates in the context as prefix tokens, meta-PI can free the Transformer from having to reconstruct such information via in-context learning (ICL). This, in turn, frees representational and computational capacity, effectively increasing the usable depth of the TFM and enabling improved task performance.

In this work, we consider meta-PI in the form of moment-based  and quantile-based summary statistics of the features across samples, providing characterizations of the data distribution.
Specifically we use the first four moments (mean, variance, skewness, kurtosis) and five quantiles (5, 25, 50, 75, 90).
These statistics per feature constitute a total of nine prefix tokens $\{\bp_1^\ast, \ldots, \bp_9^\ast\} \in \mathbb{R}^D$. 

Intuitively, the first few moments and quantiles of a distribution can be highly representative and sufficient to distinguish between different distributions \citep{vandervaart1998asymptotic}. Furthermore, summary statistics of a dataset, such as the mean, variance and tail statistics, can aid in outlier detection by providing a baseline for typical values, data dispersion and the distribution of mass across the range, against which atypical observations can be identified.

Note that our goal is not to exhaustively explore the space of possible meta-features to reach highest achievable performance, but rather to demonstrate gains even with simple statistical meta-features; leaving a more systematic exploration of this design space to future work.

\vspace{-0.075in}
\subsubsection{Generator Program (strong PI, train-time only)}
\vspace{-0.05in}
\label{sssec:unifyGenPI}

In contrast to large language models pretrained on massive \textit{real-world} corpora, TFMs occupy a unique setting in which training data are generated \textit{synthetically}. This setting is particularly advantageous, as the data-generating prior and its instantiated parameters define the underlying generative process from which finite samples are drawn. Nonetheless, existing TFM approaches largely ignore this (privileged) information, discarding the generator once training samples are produced.

\textbf{Our work is the first to recognize synthetic data priors as a rich source of information and leverage them within the formal LUPI framework}, where the generator program serves as PI available during training but not at inference, as TFMs are deployed on real-world tasks downstream.

While earlier TFMs used a single family prior such as SCMs \citep{hollmann2023tabpfn}
and GMMs \citep{shen2025fomod}, latest TFMs extend to a mixture of priors  \citep{zhang2025mitra,ding2026outformer}. These mixtures often include \textit{heterogeneous} priors including SCMs, GMMs, Copulas and tree-based priors.
They exhibit fundamentally different constructs, such as means, covariances, etc. for GMMs, the causal DAG and structural equations for SCMs, and individual feature marginals and dependency structure for Copulas. Therefore, the \textbf{central challenge} is to map these heterogeneous specifications into a common token space.

To this end, we introduce a \textbf{program encoder} encompassing two key innovations: First, we propose a serialization format and tokenization that presents a generative specification---the program, its structure, and its parameter values---as a \textit{structured sequence}, analogous to a domain-specific
language (DSL), while \textit{respecting the numeric values} of the parameters. 
Second, we 
encode this sequence using a Transformer, prepending $k$ learnable \texttt{[PREFIX]} tokens to the input. Through self-attention, these tokens accumulate information from the entire generator specification; e.g.:
\begin{equation*}
    \underbrace{[\mathbf{p}_1] \; [\mathbf{p}_2] \; \cdots \; [\mathbf{p}_k]}_{\text{learnable prefix tokens}} \;\; \underbrace{[\texttt{FAMILY:GMM}] \; [\texttt{DIM:5}] \; [\texttt{N\_COMP:3}] \; [\texttt{COMP}] \; [\texttt{W:0.4}] \; \cdots}_{\text{serialized generator specification}}
\end{equation*}
\vspace{-0.15in}

The program encoder is trained on numerous programs from the heterogeneous data priors with varying configurations of the parameters using contrastive loss.
Given a generator configuration 
$\mathcal{G}$, we construct a positive pair by slightly perturbing its parameters as 
$\mathcal{G}' = \operatorname{perturb}(\mathcal{G}, \epsilon)$, 
and encode both as 
$\mathbf{z} = \operatorname{pool}\!\big(\operatorname{Enc}(\operatorname{serialize}(\mathcal{G}))\big)$, and $\mathbf{z}' = \operatorname{pool}\!\big(\operatorname{Enc}(\operatorname{serialize}(\mathcal{G}'))\big)$.
We perform hard negative mining and employ a curriculum to improve contrastive learning. Appdx.~\ref{sec:unigenrep} provides additional details on serialization, tokenization, and encoder training, while Table~\ref{tab:program_encoder_hparams} lists the configurations.

\input{TAB/picompare}
Program encoder training precedes the TFM pretraining. 
Upon its training, each  pretraining generator program $\mathcal{G}$ is serialized and passed through the program encoder, which provides to the TFM the first $k$ output tokens as the \textbf{Generator-PI} prefix tokens; capturing a compact, fixed-size representation of the generator, regardless of the underlying family or  dimensionality. That is,  

\vspace{-0.175in}
$$\{\bp^\ast_1, \dots, \bp^\ast_k\}  = \operatorname{Enc}\!\big(\operatorname{serialize}(\mathcal{G})\big) \;.$$
\vspace{-0.175in}

Table \ref{tab:meta_vs_privileged} summarizes the two PI types for TFMs. Since the generator program is train-time-only PI, we propose a transfer-based TFM architecture trained with annealed teacher forcing, where \textit{estimated} Generator-PI gradually replaces the original, as described next.

%% file: TAB/picompare.tex
\begin{table*}[!t]
\vspace{-0.1in}
\centering
\caption{Meta-feature vs. Generator program based privileged information (PI) for TFMs.}
\vspace{-0.1in}
\hspace{-0.1in}
\scalebox{0.9}{
\begin{tabular}{|p{4.2cm}|p{1.75cm}|p{2.15cm}|p{4.35cm}|}
\hline
\textbf{Type} & \textbf{Availability} & \textbf{Architecture} & \textbf{Role} \\
\hline
Meta-feature (weak PI) & Train \& Test & Prefix tokens & Relieve ICL capacity \\
\hline
Generator program (strong PI) & Train only & Embd. transfer & Expose true generative model  \\
\hline
\end{tabular}
}
\label{tab:meta_vs_privileged}
\vspace{-0.2in}
\end{table*}

%% file: 03arch.tex
\vspace{-0.05in}
\textbf{A Teacher-Student TFM:~}
Let us denote a TFM as 
$h : (\mathcal{D}_{\text{context}}, \bx_{\text{query}}) \mapsto {y}_{\text{query}}$. 
For a privileged TFM, the context during pretraining consists of $\bPs_u$: train-only generator program embedding tokens, $\bPs_a$: inference-available meta-feature tokens, and  $\Dtr$: the training points; i.e. 
$\mathcal{D}_{\text{context}} = (\bPs_u, \bPs_a, \Dtr)$. 
During pretraining, we augment $h$ with a student model denoted $st: (\bPs_a, \Dtr) \mapsto \bPs_u$ that learns to reconstruct the train-only PI from the available context. %

At deployment, the pretrained TFM $(h,st)$ first employs the student model $st$ to obtain an estimation of the inference-unavailable PI, denoted $\hatbPsu$, and then proceeds with the forward pass by the main Transformer backbone $h$ that ingests the student-estimated PI alongside $\bPs_a$ and $\Dtr$.

\vspace{-0.05in}
\textbf{Training to Transfer PI:~}
While consuming only available inputs at inference, we aim for $(h,st)$ to reach a teacher quality model, i.e. $\phi \in \Phi: \big( \bPs_u, \bPs_a, \Xc, \yc \;|\;  \Xq \big)  \mapsto \yq$ that leverages all PI. However, the student model introduces estimation errors that the privileged TFM is to tolerate. 
This phenomenon is commonly known as {exposure bias} and arises from the mismatch between training (where ground-truth is available) and inference (where predictions are fed into the model). 

To mitigate this train-inference discrepancy, we adopt scheduled sampling \citep{bengio2015scheduled}, where train-only PI tokens are progressively replaced by student model predictions during pretraining according to a decaying schedule. Specifically, at each step, model $h$ conditions on either the true tokens $\bPs_u$  or the student predictions $\hatbPsu$, with a probability $\alpha_t$ that is annealed over time/epochs $t$.

Appdx. \ref{sec:pitransfer} provides details of the proposed architecture and training, and a diagram in Fig. \ref{fig:illustration}.

%% file: 04theoryshort.tex
Let $h_\theta$ denote a tabular foundation model and $\mathcal{H} = \{ h_\theta(\mathcal{D}, \bx): \theta \in \Theta \}$ denote the hypothesis class induced by a fixed Transformer
architecture (fixed depth, width, attention heads, and context length). Let $h^*$ be the best decision function (in terms of generalization error) in $\mathcal{H}\textbf{}$, where
$R^* = \inf_{h \in \mathcal{H}} R(h) = R(h^*)$ and $R(h) =
\mathbb{E}_{\psi \sim P_\Psi}
\mathbb{E}_{\mathcal{D} \sim P(\cdot \mid \psi)}
\mathbb{E}_{(\bx,y) \sim \mathcal{D}}
\left[ \ell(h(\mathcal{D}, \bx), y) \right]$.
Let 
$\mathcal{H}_{\text{PI}} =
\{ h_\theta(\mathcal{P}^*, \mathcal{D}, \bx) : \theta \in \Theta_{\text{PI}}\}$
define the PI-augmented hypothesis class, where $\mathcal{P}^*$ denotes the privileged information constructed by the teacher.

PI in foundation models can improve performance
through two fundamentally different mechanisms.

\textbf{I: Representation improvement via meta-features.}
In this regime, PI consists of deterministic dataset-level statistics
$\mathcal{P}^* = \bs(\mathcal{D})$ of the context dataset, which are not reliably computable
by the base Transformer with a fixed  architecture.
As a result, explicitly providing $\mathcal{P}^*$ enlarges the
effective representational capacity of the model at inference time.
This mechanism is  \emph{architectural}: PI reduces approximation
error by bypassing representational bottlenecks in $\mathcal{H}$.

The key assumption is that such meta-features are \emph{useful} for prediction,
i.e., there exists a PI-dependent predictor that achieves lower risk than
the best predictor in the base class. %
To ensure that any observed gain is not due to a mismatch between
hypothesis classes, we also impose an assumption that guarantees that
PI-dependent predictors do not define a strictly richer function class over
$(\mathcal{D}, \bx)$ than $\mathcal{H}$.
The final assumption encodes the architectural bottleneck: the base
transformer cannot approximate $\bs(\mathcal{D})$ arbitrarily well, inducing
a strictly positive representation gap.

\begin{theorem}[Model-Relative Risk Reduction via Meta-PI under Architectural Constraints]
\label{thm:metamain}

Let $\mathcal{P}^* = \bs(\mathcal{D})$ denote meta-features computed from $\mathcal{D}$.
Let the set of useful teachers be defined as
$
\Phi_u = \left\{ \phi \in \Phi \;\middle|\;
R(\phi) \le R(h^*)
\right\}.
$
Assume: (A1) ${\Phi}_u \neq \emptyset$; 
(A2) For each $\phi \in \Phi_u$, there exists  $h\in \mathcal{H}$ such that, for any  $(\mathcal{P}^*, \mathcal{D},  \bx)$ with non-zero probability, 
$\ell_{(\mathcal{P}^*, \mathcal{D})}\!\left(\phi(\mathcal{P}^*, \mathcal{D},  \bx)\right) \geq \ell_{\mathcal{D}}\!\left(h(\mathcal{D},  \bx)\right)$; and 
(A3) There exists $\epsilon > 0$ such that for all
$h \in \mathcal{H}$,
$
\mathbb{E}_{\mathcal{D}  \sim P(\cdot \mid \psi)}
\| \bz_h(\mathcal{D}) - \bs(\mathcal{D}) \|^2
\ge \epsilon,
$
where $\bz_h$ denotes the dataset-level representation by $h$. $\quad$
Then, for some constant $c > 0$, $\;\; R_{\text{PI}}^* \le R^* - c \epsilon \;$.
\end{theorem}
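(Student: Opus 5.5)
The plan is to build an explicit PI-augmented predictor $\tilde h \in \mathcal{H}_{\text{PI}}$ whose risk sits below $R^*$ by an amount proportional to the representation gap $\epsilon$ from (A3). Then $R_{\text{PI}}^* \le R(\tilde h)$ finishes the argument.

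The assumptions need some interpretation before they can carry this. As literally stated, (A1) with (A2) only gives $R(\phi)\le R(h^*)$ for some teacher. A strict gain has to come from (A3), so I will read (A3) as saying that prediction quality depends on how well the dataset-level representation matches $\bs(\mathcal{D})$. Concretely, I will assume the loss is smooth and strongly convex in the dataset summary it consumes. That is, there is a predictor head $g$ such that the Bayes-relevant predictor has the form $g(\bs(\mathcal{D}),\bx)$. Moreover, feeding a summary $\bz$ in place of $\bs(\mathcal{D})$ incurs excess loss at least $\mu\|\bz-\bs(\mathcal{D})\|^2$ for some $\mu>0$. This is the usual local quadratic-growth condition, and I would state it explicitly as the meaning of ``useful'' meta-features.

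\textbf{Step 1 (lower bound on $R^*$).} Every $h\in\mathcal{H}$ factors through its representation $\bz_h(\mathcal{D})$. By quadratic growth, $R(h)\ge R_g + \mu\,\mathbb{E}\|\bz_h(\mathcal{D})-\bs(\mathcal{D})\|^2$, where $R_g$ is the risk of $g$ fed the exact statistic. Integrating over $\psi$ and applying (A3) gives $R^*\ge R_g+\mu\epsilon$.

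\textbf{Step 2 (upper bound on $R_{\text{PI}}^*$).} Take $\phi\in\Phi_u$, which exists by (A1). Use (A2), i.e.\ that PI predictors are not a strictly richer class over $(\mathcal{D},\bx)$, to realize $\phi$ as $\phi(\mathcal{P}^*,\mathcal{D},\bx)=g(\mathcal{P}^*,\bx)$ inside $\mathcal{H}_{\text{PI}}$. The architecture reads $\bs(\mathcal{D})$ directly from the prefix tokens, so no approximation of $\bs$ is needed and $R_{\text{PI}}^*\le R(\phi)= R_g$. Combining with Step 1 gives $R_{\text{PI}}^*\le R^*-\mu\epsilon$, so the claim holds with $c=\mu$.

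The main obstacle is Step 2's identification of the useful teacher with the head $g$ that appears in the lower bound. (A2) is a pointwise inequality comparing $\phi$ to some $h$, and I have to use it in the direction that makes the head shared and the gap attributable purely to $\bz_h$ versus $\bs$. If that identification fails, I would instead define $c$ directly as $\inf_h (R(h)-R(\phi))/\mathbb{E}\|\bz_h-\bs\|^2$. I would then argue that $c>0$ from the quadratic-growth condition, with (A2) preventing $\phi$ from gaining through extra capacity rather than through $\bs$.
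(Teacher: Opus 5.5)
Your route is the paper's route. The paper's Step 4 asserts a constant $c>0$ with $R(h)\ge R(\phi)+c\,\mathbb{E}\|\bz_h(\mathcal{D})-\bs(\mathcal{D})\|^2$ for all $h\in\mathcal{H}$, justified only by the remark that useful predictors ``depend non-trivially on $\bs(\mathcal{D})$''. Its Step 5 asserts $R_{\text{PI}}^*\le R(\phi)$ because $\mathcal{H}_{\text{PI}}$ reads $\bs(\mathcal{D})$ directly. Your quadratic-growth condition (with $c=\mu$) is the first claim and your Step 2 is the second, and you are right that both are extra hypotheses rather than consequences of (A1)--(A3).

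The genuine gap is the one you half-identify as your ``main obstacle'', and the paper's proof has it too: (A2) does not merely fail to support your Step 2, it contradicts your Step 1. Taking expectations in (A2) gives $h_\phi\in\mathcal{H}$ with $R(h_\phi)\le R(\phi)$, hence $R(\phi)\ge R^*$. Combined with $R(\phi)\le R(h^*)$ from the definition of $\Phi_u$, every useful teacher has $R(\phi)=R^*$ exactly, not merely $R(\phi)\le R(h^*)$ as you wrote. So $R_{\text{PI}}^*\le R(\phi)$ yields at most $R_{\text{PI}}^*\le R^*$. Evaluating your Step 1 bound at $h_\phi$ gives $R(\phi)\ge R(h_\phi)\ge R_g+\mu\epsilon=R(\phi)+\mu\epsilon$, which is impossible since $\mu\epsilon>0$. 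The paper's Steps 2 and 4 clash in exactly the same way. Your fallback fails for the same reason: in $c:=\inf_{h}\bigl(R(h)-R(\phi)\bigr)/\mathbb{E}\|\bz_h-\bs\|^2$, the term $h=h_\phi$ has a nonpositive numerator and a denominator of at least $\epsilon$, so $c\le 0$.

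There is therefore no way to ``use (A2) in the right direction''; its direction is fixed, and it points the wrong way. Your two steps do prove a different theorem once it is stated correctly:
\begin{itemize}
\item Replace (A2) by realizability of the PI-reading head in the augmented class, i.e.\ $g(\bs(\mathcal{D}),\bx)\in\mathcal{H}_{\text{PI}}$, which gives $R_{\text{PI}}^*\le R_g$.
\item State quadratic growth uniformly over the base class: $R(h)\ge R_g+\mu\,\mathbb{E}_{\psi}\mathbb{E}_{\mathcal{D}\mid\psi}\|\bz_h(\mathcal{D})-\bs(\mathcal{D})\|^2$ for every $h\in\mathcal{H}$. Stating it only for the optimal head $g$ fed a perturbed summary is not enough, since each $h$ may pair its representation with its own head.
\item Require (A3) to hold for every $\psi$, so that it survives the outer expectation.
\end{itemize}
With these, $R_{\text{PI}}^*\le R_g\le R^*-\mu\epsilon$ with $c=\mu$. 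Note that (A2) must be removed, not just left unused. If $g\in\Phi$, then $R(g)=R_g\le R^*$ puts $g$ in $\Phi_u$, and (A2) applied to $g$ would again contradict the quadratic-growth bound.
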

\vspace{-0.05in}

\textbf{II: Information improvement via generator PI.}
In this regime, PI provides auxiliary information about the data generator
$\psi$ during training (only). This information enables the learner to construct a better
estimator of $\psi$, which is encoded into the model parameters.
This mechanism is  \emph{information-theoretic}: PI reduces
uncertainty over latent structure, leading to a better learned predictor.

The key assumption is that this uncertainty is non-zero, i.e.,
$H(\psi \mid \mathcal{D}) > 0$, ensuring that inference over the generator
is non-trivial for the standard learner.

\begin{theorem}[Information-Theoretic Risk Reduction via Generator based PI]
\label{thm:generatormain}
Assume: (A1) and (A2) in Theorem \ref{thm:metamain}, as well as
(A3) The conditional entropy $
H(\psi \mid \mathcal{D})$ is positive, i.e. 
$
H(\psi \mid \mathcal{D})
=
\mathbb{E}_{\mathcal{D}}
\left[
- \int p(\psi \mid \mathcal{D})
\log p(\psi \mid \mathcal{D})
\, d\psi
\right] > 0
$. $\quad$
Then, 
$\;\;R_{\text{PI}}^* < R^*$.
\end{theorem}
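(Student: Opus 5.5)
The plan is to compare the best PI-augmented risk $R_{\text{PI}}^*$ against $R^*$ through an explicit PI-dependent predictor that exploits knowledge of $\psi$. The weak inequality comes from (A1): any useful teacher $\phi \in \Phi_u$ satisfies $R(\phi) \le R(h^*)$, so $R_{\text{PI}}^* \le R^*$. The real work is making the inequality strict, and (A3) is what should supply that.

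First I would decompose the risk of a predictor that sees only $(\mathcal{D},\bx)$. Conditioning on $\mathcal{D}$, the Bayes-optimal such predictor is the posterior predictive of Eq.~\eqref{eq:ppd}, which mixes $p(y\mid\bx,\psi)$ over $p(\psi\mid\mathcal{D})$. Under log-loss, its risk exceeds that of the oracle predictor $p(y\mid \bx,\psi)$ by the expected KL divergence between the oracle and the mixture. Equivalently, the excess is the conditional mutual information $I(y;\psi\mid \mathcal{D},\bx)$. Since $h^*$ lives in $\mathcal{H}$, which only sees $(\mathcal{D},\bx)$, its risk is at least the PPD risk, so
\[
R^* \ge R_{\text{PPD}} = R_{\text{oracle}} + I(y;\psi\mid\mathcal{D},\bx).
\]

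Next, on the PI side, I would use (A2) to realize the relevant teacher inside $\mathcal{H}_{\text{PI}}$ without altering the comparison class on $(\mathcal{D},\bx)$. I would then argue that a teacher conditioning on $\mathcal{P}^*$, which encodes $\psi$ or a sufficient statistic of it, attains risk at most that of the oracle, up to the approximation already accounted for in $\Phi_u$. This gives $R_{\text{PI}}^* \le R_{\text{oracle}}$, and combined with the display above it yields
\[
R_{\text{PI}}^* \le R^* - I(y;\psi\mid\mathcal{D},\bx).
\]

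The main obstacle is showing that $I(y;\psi\mid\mathcal{D},\bx) > 0$ follows from $H(\psi\mid\mathcal{D})>0$. Posterior uncertainty about $\psi$ does not by itself guarantee that the label is informative about $\psi$: non-identifiable directions of $\psi$ could leave $p(y\mid\bx,\psi)$ unchanged. I would first try to interpret $\psi$ as identified only up to its predictive law, quotienting by $\psi\sim\psi'$ iff $p(\cdot\mid\cdot,\psi)=p(\cdot\mid\cdot,\psi')$. Under that reading, positive entropy of the quotient posterior means that, on a set of positive probability, the posterior mass spreads over predictively distinct hypotheses. Strict convexity of the KL divergence (Jensen's gap for the log mixture) then gives a strictly positive expected excess. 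If this identifiability reading is not accepted, I would add it explicitly as a mild regularity condition accompanying (A3), noting that the paper's (A3) is intended in this sense.
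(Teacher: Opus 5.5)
Your skeleton is the paper's. Its Step 1 derives $R(h^\psi)<R(h^{\mathrm{Bayes}})\le R^*$ from (A3), its Step 2 asserts that $\mathcal{H}_{\text{PI}}$ reaches oracle risk, and its Step 4 chains the two. Your version of Step 1 is sharper than the paper's. Under log-loss the Bayes--oracle excess is exactly $I(y;\psi\mid\mathcal{D},\bx)$, and you are right that $H(\psi\mid\mathcal{D})>0$ does not by itself make this positive. If the uncertainty is confined to coordinates of $\psi$ that do not affect $p(y\mid\bx,\psi)$, then $H(\psi\mid\mathcal{D})>0$ yet the excess is zero. And for a non-strictly-proper loss, distinct hypotheses can even induce the same decision. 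The paper's phrase ``must average over multiple plausible generators, and therefore incurs strictly higher risk'' silently assumes the predictive identifiability you propose to add. So that part of your plan is sound.

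The genuine gap is the PI-side inequality $R_{\text{PI}}^*\le R_{\text{oracle}}$, which none of (A1)--(A3) delivers. (A1) only supplies a teacher with $R(\phi)\le R^*$. That gives at most the weak inequality, and even that needs $\Phi_u\subseteq\mathcal{H}_{\text{PI}}$, which is never stated. An unquantified ``up to the approximation'' also cannot give strictness unless the slack is strictly below $I(y;\psi\mid\mathcal{D},\bx)$.

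More seriously, (A2) does not do what you use it for. It places a pointwise-dominating learner in the base class $\mathcal{H}$, not in $\mathcal{H}_{\text{PI}}$. Integrating gives $R(\phi)\ge R(h_\phi)\ge R^*$, so together with the definition of $\Phi_u$, every useful teacher has risk exactly $R^*$. Consequently, if your oracle-level teacher (risk $R_{\text{oracle}}<R^*$) belonged to $\Phi$, it would be useful, and (A2) would produce $h\in\mathcal{H}$ with $R(h)\le R_{\text{oracle}}<R^*$, a contradiction. More generally, if $\Phi\supseteq\mathcal{H}_{\text{PI}}$, then (A2) forces $R_{\text{PI}}^*\ge R^*$, the opposite of the claim.

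So the strict gain cannot be routed through a useful teacher. You need a separate realizability assumption on $\mathcal{H}_{\text{PI}}$, with (A2) dropped or restricted. For example: $\mathcal{P}^*$ determines $p(\cdot\mid\bx,\psi)$, and $\mathcal{H}_{\text{PI}}$ contains predictors within $\delta<I(y;\psi\mid\mathcal{D},\bx)$ of oracle risk.

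The paper's proof has the same hole: its Step 2 simply asserts $R(h_{\text{PI}})\approx R(h^\psi)$, and its Step 3 invokes (A2) only rhetorically. Note also that this step works only because $\mathcal{H}_{\text{PI}}$ receives $\mathcal{P}^*$ at evaluation. A deployed model without $\mathcal{P}^*$ is a function of $(\mathcal{D},\bx)$, and your own first display bounds its risk below by $R_{\text{PPD}}$.
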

\vspace{-0.05in}

Appdx. \ref{sec:theoryappx} presents the detailed statements along with proofs. In short, 
mechanism I improves what the model can \emph{compute at inference time},
while mechanism II improves what the model can \emph{infer during training}.
In both cases, the architecture of the base learner remains fixed; PI
improves performance either by improving access to representations or by
improving inference of latent generative structure.

Through these theorems, Appdx. \ref{ssec:population}
establishes the theoretical potential of our approach, by characterizing the conditions under which PI narrows the \textbf{population-level} approximation gap between a standard learner and a PI-augmented hypothesis class.
In addition, Appdx. \ref{ssec:finite} analyzes the empirical convergence properties of our approach in \textbf{finite-data} regimes, examining the  rate at which the model realizes the benefits of PI in practice.

%% file: 05experiments.tex
\textbf{TFM Backbone, Data, Metrics:} Our proof-of-concept study (Sec. \ref{sec:poc}) considered GMM-only data prior and a  2-layer Transformer backbone. 
Having developed a unified program-as-PI representation across heterogeneous priors, we pretrain a 10-layer outlier detection TFM on a mixture of five (5) different priors \cite{ding2026outformer}. Test set includes 250$\times$5 datasets sampled across these priors, as well as (57) datasets from the real-world benchmark ADBench \cite{han2022adbench}.
We track and report training (cross-entropy) loss, as well as AUROC and AUPRC performances, over epochs.
We perform paired permutation tests to statistically compare two methods and report the $p$-value. See Appdx. \ref{app:setup} for  details.

\vspace{-0.1in}
\subsection{Main Results}
\vspace{-0.05in}

Fig.~\ref{fig:main} shows the training loss (left) and average test AUROC (right) across epochs for the \method-enabled TFM and the no-PI baseline. Consistent with the proof-of-concept results in Fig.~\ref{fig:prelim}, privileged TFMs learn more rapidly, reaching lower training loss and higher AUROC in earlier epochs, and achieve improved asymptotic performance.
Besides the \method'ed model that transfers PI (unified program encoding) during training (red), Fig.~\ref{fig:main} shows the variant trained without transfer (gray). While the training dynamics differ (e.g.,  loss bumps slightly when teacher forcing is annealed),  \method reaches competitive loss and performance at convergence. Appdx. \ref{app:pairedtests} presents the paired tests.

We report per-dataset results on ADBench w.r.t. both AUROC and AUPRC  in Appdx.~\ref{ssec:adb}. Paired tests across checkpoints (see Appdx. Table \ref{tab:adbench_pval}) indicate that the \method-enabled TFM matches the no-PI baseline's performance at earlier epochs, with no statistically significant differences (e.g. $p$$=$$0.299$ for  \textsc{PIQL} @650 vs. base @920 epochs, and $p$$=$$0.119$ for \textsc{PIQL} @700 vs. base @1050 epochs).

\vspace{-0.1in}
\subsection{Ablation Analyses}
\label{sec:ablations}
\vspace{-0.075in}

\input{FIG/mainresults.tex}

\textbf{LLM-as-Program-Encoder:}
We evaluate the utility of our unified program encoder by replacing it with embeddings from off-the-shelf language models. We consider both a general-purpose open-source model, \texttt{Qwen3-Embed-4B} \cite{qwen3_embed_4b}, and a code-specialized, identifier-aware model, \texttt{codet5p-110m-embedding} \cite{wang2021codet5}.
Fig. \ref{fig:ablations} (left) shows that off-the-shelf embeddings do not provide PI benefits, highlighting the importance of our program-specialized encoder.

\vspace{-0.05in}
\textbf{Generator-PI, Meta-PI and Meta-PI++:}
Fig.~\ref{fig:prelim} studied meta-PI (bronze) and generator-PI (silver) in isolation, while the proposed \method-TFM can ingest both. Fig.~\ref{fig:ablations} (middle) shows that augmenting generator-PI with meta-PI yields marginal gains, suggesting that train-time generator-PI provides more effective training signal.
Our initial list contained only 9 meta-features (Sec. \ref{ssec:metapi}).
To strengthen the meta-PI signal, we expand to 100 meta-features via an LLM (see Appdx.~\ref{ssec:llmmeta}). Fig.~\ref{fig:ablations} (right) shows that meta-PI++ using these 100 LLM-generated features alone improves performance, reaching silver-grade levels. These results motivate further research into novel PI designs for FMs.

\input{FIG/ablationfig}

\vspace{-0.1in}
\subsection{Layer-wise Representation Probing}
\vspace{-0.075in}

Finally, we use linear probes \citep{conneau2018you} (diagnostic classifiers) to analyze representations from each layer (0--9) of pretrained privileged vs. standard TFMs. Specifically, we predict two targets: ($i$) type of generator (5-way, balanced) and ($ii$) number of components ($M$$=$$5$-way balanced, for GMM-generated data only). Note that these latent variables are \textbf{not} available directly from the observed context samples at inference. As shown in Fig. \ref{fig:prelim} (right), PIQL'ed TFM representations reflect these latent information \textit{more effectively} (higher classification accuracy) and at \textit{earlier layers} (matching results highlighted). These results provide empirical support for two mechanisms by which PI enhances TFMs: by freeing up ICL capacity and by capturing latent distributional information.

%% file: FIG/mainresults.tex
\begin{figure}[!t]
\vspace{-0.15in}
    \centering
\begin{subfigure}[t]{0.45\textwidth}
    \vspace{0pt}
    \centering
    \includegraphics[width=\linewidth]{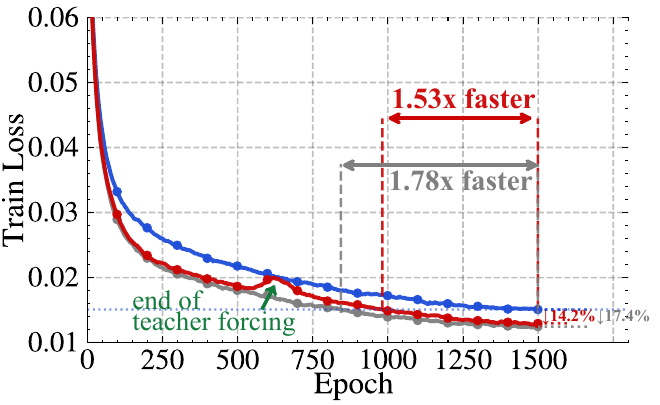}
    \label{fig:training_curve_10layer}
\end{subfigure}
\begin{subfigure}[t]{0.45\textwidth}
    \vspace{0pt}
    \centering
    \includegraphics[width=\linewidth]{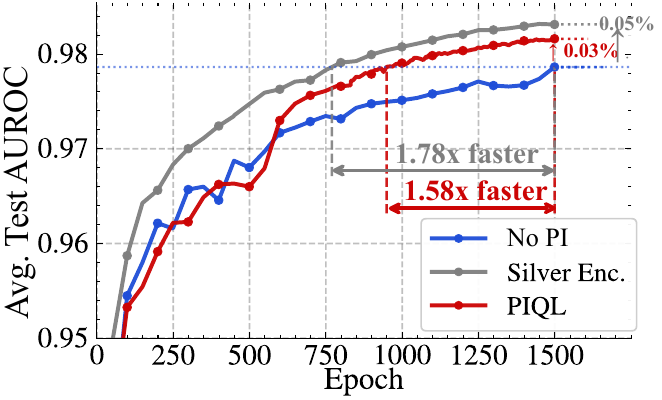}
    \label{fig:avg_auroc_10layer}
\end{subfigure}
    \vspace{-0.2in}
    \caption{Training loss \textbf{(left)} and average test AUROC \textbf{(right)} for the proposed \method model (red) versus the no-PI baseline (blue). \method exhibits accelerated learning, achieving lower loss and higher AUROC in fewer epochs. Despite minor fluctuations during teacher-forcing annealing, \method achieves similar loss and performance to the no-transfer variant (gray) at convergence. (best in color)}
    \label{fig:main}
    \vspace{-0.2in}
\end{figure}

%% file: FIG/ablationfig.tex
\begin{figure}[h]
\vspace{-0.1in}
\centering
\hspace{-0.1in}
\begin{subfigure}[t]{0.33\textwidth}
    \vspace{0pt}
    \centering  \includegraphics[width=\linewidth]{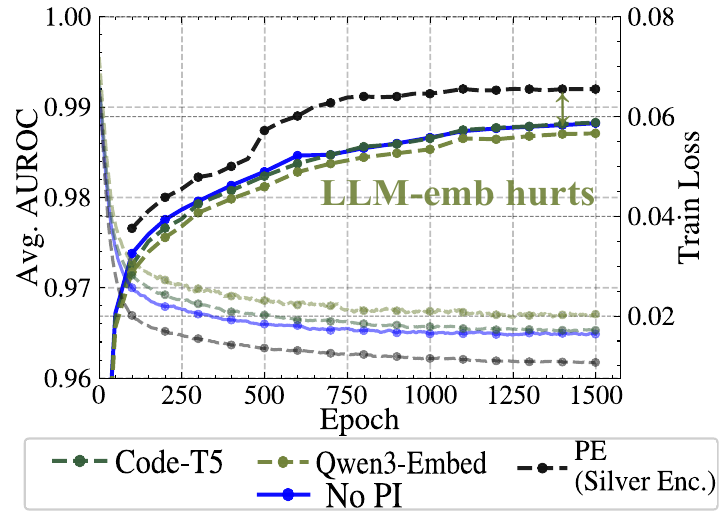}
    \label{fig:llm_ablation}
\end{subfigure}
\begin{subfigure}[t]{0.33\textwidth}
    \vspace{0pt}
    \centering  \includegraphics[width=\linewidth]{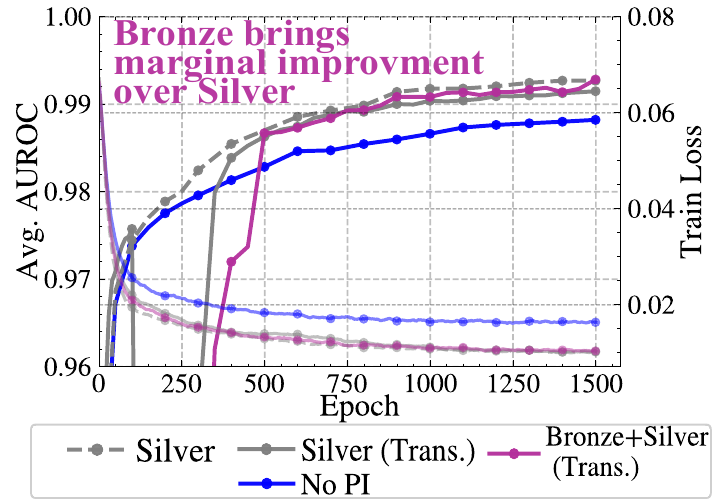}
    \label{fig:bronzesilver}
\end{subfigure}
\begin{subfigure}[t]{0.33\textwidth}
    \vspace{0pt}
    \centering  \includegraphics[width=\linewidth]{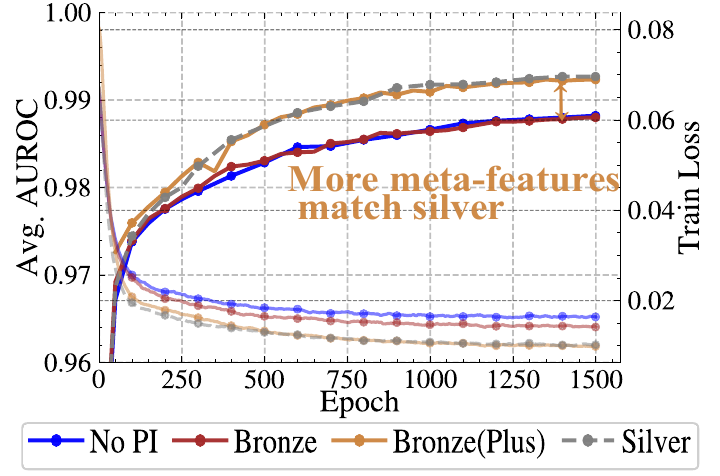}
    \label{fig:bronzeplus}
\end{subfigure}
\vspace{-0.15in}
\caption{\textbf{(left)} LLM-as-Program Encoder; \textbf{(middle)} GeneratorPI + MetaPI; \textbf{(right)} MetaPI++}
\label{fig:ablations}
\vspace{-0.1in}
\end{figure}

%% file: 07conclusion.tex
We introduced \method, Privileged Information for Quick and Quality Learning, a new paradigm   for tabular foundation models (TFMs) in which auxiliary information is provided during training.  We constructed two forms of PI based on meta-features and the data-generating program, supporting two  mechanisms for improving TFMs: freeing up ICL capacity and incorporating latent generative information.  
To operationalize \method, we developed a program encoder that unifies diverse generators and designed an architecture that transfers train-time-only generator-PI via teacher-annealed training. Experiments showed that \method enables TFMs to reach lower loss faster, in effect, reducing data and compute requirements. We expect our work to motivate future research on PI-guided pretraining as a principled and practical approach toward more efficient and effective foundation models.

%% file: impactlimitations.tex
\textbf{Broader Impact:~}
Our work introduces a new paradigm for pretraining foundation models (FMs), with potential to accelerate learning and improve asymptotic loss. The implication of an improved learning rate is fewer training epochs, which translates to reduced compute and data requirements. In the era of increasingly large FMs that follow a rapid scaling trend, reducing pretraining cost and resource demands can help lower barriers to entry for developing such models. It could also contribute to more efficient use of compute infrastructure, with tangible  benefits in energy consumption and environmental impact.

\textbf{Limitations:~} 
Our work focuses specifically on tabular foundation models and leverages their unique property of being trained on synthetic datasets to design generator-specific privileged information (PI). While our study is grounded in this setting, we anticipate that the underlying principles could extend naturally to other types of foundation models as well.
In addition, our work serves as a proof of concept demonstrating the feasibility of this PI-driven paradigm; however, further extensive experiments are needed to assess its generalization across broader settings.
Finally, while the notion of PI is intuitive, its construction is inherently open-ended. In this work, we design two new forms of PI for TFMs, while leaving the exploration of alternative constructions to future work.

%% file: 08appendix.tex
\section{Extended Related Work}
\label{sec:related}

\input{06related}

\section{Unifying Generator Programs in a Shared Token Space}
\label{sec:unigenrep}

\subsection{Goal and Challenges}
We aim to \textbf{encode the generator specification}, consisting of the program, its structure, and its parameters, rather than the data it emits. Concretely, a generator configuration is derived from the following synthetic data priors:
\begin{itemize}
    \item A {Gaussian Mixture Model (GMM)} with specific parameters (number of components, means, covariances, mixture weights),
     \item A {Structural Causal Model (SCM)} with a specific DAG and structural equations, and
    \item A {Copula} with specific feature marginals and dependence structure.
\end{itemize}
Our \textbf{goal} is to represent any such generator configuration as a small set of vector tokens. That is, we seek to encode the \emph{generative program itself} into a compact token representation.

This poses a nontrivial challenge, as the three generator families are fundamentally different objects:
\begin{itemize}
    \item \textbf{GMM:} defined by $M$ Gaussian clusters, each with a mean vector $\boldsymbol{\mu}_k \in \mathbb{R}^d$, a covariance matrix $\boldsymbol{\Sigma}_k \in \mathbb{R}^{d \times d}$, and a mixture weight $\pi_k$;
     \item \textbf{SCM:} defined by a DAG adjacency matrix $\mathbf{A} \in \{0,1\}^{d \times d}$, structural equations with specified functional forms, and per-variable noise distributions;
    \item \textbf{Copula:} defined by a copula family with associated parameters, together with $d$ marginal distributions, each specified by its own family and parameters.  
\end{itemize}
In addition, each generator family contains distinctive outlier types. Overall, our  mixture consists of five priors, which we briefly describe in the following. We refer to \cite{ding2026outformer} for detailed descriptions. 

\begin{itemize}[noitemsep, topsep=0pt]
    \item[($i$)]  For GMM, outliers are generated by inflating the covariance matrices of subspaces and sampling from the resulting inflated Gaussian distributions.
\end{itemize}

For SCM, there are two types of outlier archetypes: ($ii$) measurement and ($iii$) structural outliers.

\begin{itemize}[noitemsep, topsep=0pt]
    \item[($ii$)]  The former picks one node from the DAG and samples with inflated exogenous noise; 
    \item[($iii$)] The latter breaks the causal dependencies within the DAG by removing or reversing edges. 
\end{itemize}

Copulas also exhibit two types of outlier archetypes: ($iv$) dependence and ($v$) probabilistic outliers.

\begin{itemize}[noitemsep, topsep=0pt]
    \item[($iv$)]  The former samples outliers by overwriting the copula vector and inverting a subset of dimensions; 
    \item[($v$)] The latter samples outliers by selecting a subset of features and perturbing the copula coordinates by pushing towards the boundaries. 
\end{itemize}

The central \textbf{challenge} is to map these five heterogeneous specifications into a common token space, i.e., unify the distinct prior programs.

\subsection{Program Serialization and Tokenization}

\textbf{Serialization:} We treat each generator specification as a \textbf{structured sequence}, analogous to a domain-specific language (DSL) or program. Below we illustrate the structured format for each generator family.

\begin{tcolorbox}[
colback=blue!5!white,
colframe=blue!75!black, 
title=\textbf{Example of GMM Prior Description.},fonttitle=\bfseries,
  fontupper=\small 
]
\begin{verbatim}
[FAMILY:GMM] [DIM:5] [N_COMP:3]
[ENTITY] [TYPE:COMPONENT] [ID:0] [MEAN:-1.9,0.0,1.6,...] [COV_DIAG:0.8,0.5,...]
[ENTITY] [TYPE:COMPONENT] [ID:1] [MEAN:0.0,-0.2,-0.6,...] [COV_DIAG:0.9,0.4,...]
[ENTITY] [TYPE:COMPONENT] [ID:2] [MEAN:0.1,-0.9,0.0,...] [COV_DIAG:0.1,0.7,...]
[OUTLIER] [TYPE:inflated_cov] [SUB_DIMS:1.0,2.0...]
[ENTITY] [TYPE:OUTLIER_COMPONENT] [ID:0] [INF_COV:0.8,2.5,...]
[ENTITY] [TYPE:OUTLIER_COMPONENT] [ID:1] [INF_COV:0.9,2.3,...]
[ENTITY] [TYPE:OUTLIER_COMPONENT] [ID:2] [INF_COV:0.1,3.5,...]
\end{verbatim}
\end{tcolorbox}

The program above encodes a 5-dimensional GMM prior with three inlier components, where each component is described by its mean vector and diagonal covariance. The outlier mechanism is specified as inflated covariance on selected sub-dimensions, meaning outliers are generated by increasing variance along certain feature dimensions for each corresponding mixture component.

\begin{tcolorbox}[
colback=blue!5!white,
colframe=blue!75!black, 
title=\textbf{Example of SCM-Measurement Prior Description.},fonttitle=\bfseries,
  fontupper=\small 
]
\begin{verbatim}
[FAMILY:SCM] [DIM:4]
[ENTITY] [TYPE:VAR][ID:0] [LAYER:0] [EQ:tanh] [PARENTS:none] [COEFFS:none]
[ENTITY] [TYPE:VAR][ID:1] [LAYER:0] [EQ:tanh] [PARENTS:none] [COEFFS:none]
[ENTITY] [TYPE:VAR][ID:2] [LAYER:1] [EQ:leaky_relu] [PARENTS:1.0] [COEFFS:-0.7]
[ENTITY] [TYPE:VAR][ID:3] [LAYER:2] [EQ:tanh] [PARENTS:0.0,2.0] [COEFFS:1.4,-0.1]
[OUTLIER] [TYPE:noise_scale] [HIGH_NOISE:5.0] [HIGH_NOISE_PROB:0.2][N_PROTO:2]
[ENTITY] [TYPE:OUTLIER_PROTOTYPE] [ID:0] [HIGH_NOISE_NODES:1.0,2.0]
[ENTITY] [TYPE:OUTLIER_PROTOTYPE] [ID:1] [HIGH_NOISE_NODES:2.000]
\end{verbatim}
\end{tcolorbox}

The program above  encodes a 4-dimensional SCM, where variables are generated according to a layered causal graph: variables 0 and 1 are root nodes, variable 2 depends on variable 1 through a leaky-ReLU equation, and variable 3 depends on variables 0 and 2 through a tanh equation. The outlier mechanism is noise-scale perturbation, where selected variables receive high exogenous noise with probability 0.2, producing two outlier prototypes: one affecting nodes 1 and 2, and another affecting node 2 only.

\begin{tcolorbox}[
colback=blue!5!white,
colframe=blue!75!black, 
title=\textbf{Example of SCM-Structural Prior Description.},fonttitle=\bfseries,
  fontupper=\small 
]
\begin{verbatim}
[FAMILY:SCM] [DIM:4]
[ENTITY] [TYPE:VAR][ID:0] [LAYER:0] [EQ:tanh] [PARENTS:none] [COEFFS:none]
[ENTITY] [TYPE:VAR][ID:1] [LAYER:0] [EQ:tanh] [PARENTS:none] [COEFFS:none]
[ENTITY] [TYPE:VAR][ID:2] [LAYER:1] [EQ:leaky_relu] [PARENTS:1.0] [COEFFS:-0.7]
[ENTITY] [TYPE:VAR][ID:3] [LAYER:2] [EQ:tanh] [PARENTS:0.0,2.0] [COEFFS:1.4,-0.1]
[OUTLIER] [TYPE:weight_mask] [PERTURB_PROB:0.2]  [N_PROTO:2]
[ENTITY] [TYPE:OUTLIER_PROTOTYPE] [ID:0] [FLIPPED_NODES:0.0] [DROPPED_NODES:2.0]
[ENTITY] [TYPE:OUTLIER_PROTOTYPE] [ID:1] [FLIPPED_NODES:1.0] [DROPPED_NODES:]
\end{verbatim}
\end{tcolorbox}

The above encodes a similar 4-dimensional SCM: variables 0 and 1 are root variables, variable 2 depends on variable 1, and variable 3 depends on variables 0 and 2. The outlier mechanism is a weight-mask perturbation, where causal relationships are altered with probability 0.2 by flipping or dropping selected nodes/edges. Specifically, prototype 0 flips node 0 and drops node 2, while prototype 1 flips node 1 without dropping any node, creating structural outliers that violate the normal causal generation process.

\begin{tcolorbox}[
colback=blue!5!white,
colframe=blue!75!black, 
title=\textbf{Example of Copula-Probabilistic Prior Description.},fonttitle=\bfseries,
  fontupper=\small 
]
\begin{verbatim}
[FAMILY:COPULA] [DIM:4]
[ENTITY] [TYPE:COPULA_BASE] [ID:0] [COPULA_PARAM:random] [CHOL:1.000,0.852,...]
[ENTITY] [TYPE:MARGINAL] [ID:0] [DIST:studentt] [PARAMS:8,0,1]
[ENTITY] [TYPE:MARGINAL] [ID:1] [DIST:beta] [PARAMS:1.2,3.0]
[ENTITY] [TYPE:MARGINAL] [ID:2] [DIST:normal] [PARAMS:0.7,1.6]
[ENTITY] [TYPE:MARGINAL] [ID:3] [DIST:exponential] [PARAMS:1.8]
[OUTLIER] [TYPE:PROBABILISTIC] [N_PROTO:2] [STRENGTH_RANGE:0.2~0.4]
[ENTITY] [TYPE:OUTLIER_PROTOTYPE] [ID:0] [PERTURBED_DIMS:0.0,1.0,2.0]
[ENTITY] [TYPE:OUTLIER_PROTOTYPE] [ID:1] [PERTURBED_DIMS:1.0]
\end{verbatim}
\end{tcolorbox}

The above encodes a 4-dimensional copula prior, where the dependence structure is specified by a copula base through a Cholesky factor, while each dimension has its own distinct marginal distribution: Student-t, Beta, Normal, and Exponential. The outlier mechanism is probabilistic perturbation, where selected dimensions are perturbed with a strength sampled from 0.2–0.4. Specifically, prototype 0 perturbs dimensions 0, 1, and 2, while prototype 1 perturbs only dimension 1, creating outliers that deviate from the normal marginal/dependence pattern.

\begin{tcolorbox}[
colback=blue!5!white,
colframe=blue!75!black, 
title=\textbf{Example of Copula-Dependence Prior Description.},fonttitle=\bfseries,
  fontupper=\small 
]
\begin{verbatim}
[FAMILY:COPULA] [DIM:3]
[ENTITY] [TYPE:COPULA_BASE] [ID:0] [COPULA_PARAM:random] [CHOL:0.83,0.25,0.216]
[ENTITY] [TYPE:MARGINAL] [ID:0] [DIST:normal] [PARAMS:-0.88,1.76]
[ENTITY] [TYPE:MARGINAL] [ID:1] [DIST:interp] [PARAMS:lo=-5.3,hi=6.41,u_grid=2000]
[ENTITY] [TYPE:MARGINAL] [ID:2] [DIST:exponential] [PARAMS:1.4]
[OUTLIER] [TYPE:DEPENDENCE] [N_PROTO:1] [STRENGTH_RANGE:0.97~0.99] 
[ENTITY] [TYPE:OUTLIER_PROTOTYPE] [ID:0] [DISTURBED_DIMS:0.0,2.0]
\end{verbatim}
\end{tcolorbox}

The description above encodes a similar 3-dimensional copula prior, where the copula base defines the dependence structure through a Cholesky factor. Each dimension has its own marginal distribution: Normal, Interpolated/Custom, and Exponential. The outlier mechanism is a dependence perturbation, meaning that outliers are generated by strongly disrupting the dependency pattern among selected dimensions. In this case, the single outlier prototype disturbs dimensions 0 and 2 with high strength, creating samples that may look marginally plausible but violate the normal cross-feature dependence.

\textbf{Tokenization:} With this \textbf{structured/serialized format}, we next describe how to \textbf{tokenize} the program. 

Taking the first GMM as an example, the GMM prior is first serialized as a bracketed program string, where each block, such as \texttt{[FAMILY:GMM]}, \texttt{[ENTITY]}, \texttt{[MEAN:...]}, explicitly marks the semantic role of the information. The tokenizer then parses each bracket into typed tokens: categorical items like \texttt{FAMILY}, \texttt{GMM}, \texttt{ENTITY}, and \texttt{COMPONENT} become symbol tokens; scalar fields like \texttt{DIM}, \texttt{N\_COMP},and \texttt{ID} become scalar tokens; vector fields like \texttt{MEAN} and \texttt{COV\_DIAG} become one vector entry token per dimension; and \texttt{SUB DIMS} becomes index entry tokens indicating which feature dimensions are affected. The box below shows an example tokenization of the GMM example above.

\begin{tcolorbox}[
colback=blue!5!white,
colframe=blue!75!black,
title=\textbf{Example of Tokenized GMM Prior Description.},
fonttitle=\bfseries,
fontupper=\small
]
\begin{verbatim}
[
  {"type": "symbol", "name": "FAMILY"},
  {"type": "symbol", "name": "GMM"},
  {"type": "symbol", "name": "DIM"},
  { "type": "scalar","field": "DIM", "value": 5.0, "family": "GMM", 
  "block": null
  },
  {"type": "symbol", "name": "N_COMP"},
  {"type": "scalar","field": "N_COMP", "value": 3.0, "family": "GMM", 
  "block": null
  },
  {"type": "symbol", "name": "ENTITY"},
  {"type": "symbol", "name": "TYPE"},
  {"type": "symbol", "name": "COMPONENT"},
  {"type": "symbol", "name": "ID"},
  {"type": "scalar","field": "ID", "value": 0.0, 
  "family": "GMM", "block": "ENTITY", "entity_type": "COMPONENT","entity_id": 0
  },
  {"type": "symbol", "name": "MEAN"},
  {"type": "vector_entry","field": "MEAN","dim": 0,"value": -1.9, "family": "GMM", 
  "block": "ENTITY", "entity_type": "COMPONENT","entity_id": 0
  },
  "... remaining COMPONENT and OUTLIER_COMPONENT tokens omitted ..."
]
\end{verbatim}
\end{tcolorbox}

Given the \textbf{tokenized} prior description, each token is converted into a \textbf{dense embedding} by summing learnable embeddings for its semantic attributes. For example, consider the token:
\begin{verbatim}
{
  "type": "vector_entry",
  "field": "MEAN",
  "dim": 0,
  "value": -1.9,
  "family": "GMM",
  "block": "ENTITY",
  "entity_type": "COMPONENT",
  "entity_id": 0
}
\end{verbatim}
This token represents ``dimension 0 of the mean vector for component 0 in a GMM prior''. Its embedding is obtained by summing several pieces: 
(i) field embedding for \texttt{MEAN}, 
(ii) family embedding for \texttt{GMM}, 
(iii) entity-type embedding for \texttt{COMPONENT}, 
  (iv) entity-ID embedding for component \texttt{0}, 
  (v) dimension embedding for dim \texttt{0}, and 
 (vi) value embedding for \texttt{-1.9}. Importantly, the numeric value \texttt{-1.9} is first normalized and then passed through a small MLP to produce its embedding value. Therefore, the final token embedding does not only encode the number itself;  it also encodes what that number means within the prior structure.

 To build a meaningful embedding table, we also utilize a bootstrap step in the beginning to build vocabulary before training, by sampling many synthetic prior descriptions across GMM, SCM, and Copula families, tokenizing them, and collecting all possible symbols, fields, family names, and entity types needed to initialize the embedding tables. Without this step, some tokens appearing later could be missing from the vocabulary and would collapse to the default unknown/PAD index, losing their semantic meaning. The bootstrap is saved and then utilized at TFM pre-training/ inference phase. 
 
 We remark that this special serialization and tokenization of programs produces more meaningful signals than simply using off-the-shelf  text embeddings from language models. (See our LLM-as-Program-Encoder ablations in Sec. \ref{sec:ablations}.) It also embeds more specific program information than existing scientific tokenization methods such as xVal \citep{golkar2023xval}.

\textbf{Encoding:} Finally, we encode the serialized and tokenized sequence using a Transformer, prepending $k$ learnable \texttt{[PREFIX]} tokens (with different embedding initialization) to the input. Through self-attention, these summary tokens accumulate information from the entire generator specification:
\begin{equation*}\underbrace{[\mathbf{p}_1] \; [\mathbf{p}_2] \; \cdots \; [\mathbf{p}_k]}_{\text{learnable prefix tokens}} \;\; \underbrace{[\texttt{FAMILY:GMM}] \; [\texttt{DIM:5}] \; [\texttt{N\_COMP:3}] \; [\texttt{COMP}] \; [\texttt{W:0.4}] \; \cdots}_{\text{serialized generator specification}}
\end{equation*}

A full generator sequence $\mathcal{G}$ is passed through the Transformer encoder, and only the first $k$ output positions are retained:
\begin{equation*}
    \{\bp^\ast_1, \dots, \bp^\ast_k\}  = \operatorname{Enc}\!\big(\operatorname{serialize}(\mathcal{G})\big) \;\in\; \mathbb{R}^{d_{\text{model}}}
\end{equation*}

These $k$ vectors constitute the \emph{generator tokens}; a compact, fixed-size representation of the generator, regardless of the underlying family or dimensionality.

\subsection{Program Encoder Training}
\paragraph{Program Contrastive Loss.}
We train the program encoder using a margin-based contrastive objective over serialized prior programs. For an anchor generator $\mathcal{G}$, we construct a positive $\mathcal{G}^{+}$ by weakly perturbing its parameters while preserving the same prior family, outlier mechanism, and dimensionality. Negatives are either easy negatives sampled from different prior families, or hard negatives generated by applying a stronger perturbation to the same anchor configuration. After serialization, tokenization, and encoding, we obtain embeddings $\mathbf{z}$, $\mathbf{z}^{+}$, and $\{\mathbf{z}^{-}_i\}_{i=1}^{M}$.

We optimize a weighted triplet-style objective:
\begin{equation*}
    \mathcal{L}_{\text{triplet}}
    =
    \frac{1}{M}
    \sum_{i=1}^{M}
    w_i
    \left[
        \|\mathbf{z}-\mathbf{z}^{+}\|_2
        -
        \|\mathbf{z}-\mathbf{z}^{-}_i\|_2
        +
        m_i
    \right]_{+},
\end{equation*}
where hard negatives use a separate margin $m_{\text{hard}}$ and weight $w_{\text{hard}}$, while easy negatives use margin $m_{\text{easy}}$ and unit weight. We further add a hardest-negative mining term
\begin{equation*}
    \mathcal{L}_{\text{mine}}
    =
    \left[
        \|\mathbf{z}-\mathbf{z}^{+}\|_2
        -
        \min_i \|\mathbf{z}-\mathbf{z}^{-}_i\|_2
        +
        m_{\text{hard}}
    \right]_{+}.
\end{equation*}
Finally, to preserve high-level prior-family structure, we introduce learnable family proxies and apply a cross-entropy loss over normalized anchor-proxy similarities. The final objective is
\begin{equation*}
    \mathcal{L}
    =
    \mathcal{L}_{\text{triplet}}
    +
    \lambda_{\text{mine}}\mathcal{L}_{\text{mine}}
    +
    \lambda_{\text{proxy}}\mathcal{L}_{\text{proxy}}.
\end{equation*}

\paragraph{Program Encoder Architecture and Training.}
We train two versions of the program encoder: the first version is for GMM prior only and the second version is for Mix-priors. The GMM-only program encoder contains one \texttt{PREFIX} token while the latter contains $k=10$ learnable prefix tokens to capture more information. We list the detailed hyperparameter configurations of the Transformer based encoders in Table \ref{tab:program_encoder_hparams}. 

Both program encoders are trained for $20$ epochs with $80$ steps per epoch, batch size $16$, AdamW optimizer with learning rate $10^{-4}$, warmup ratio $0.1$, and minimum learning-rate ratio $5\times 10^{-6}$. The contrastive training setup uses $8$ negatives per anchor. The prior-generation range uses dimensionality $d\in[2,100]$, GMM components in $[1,5]$, maximum mean magnitude $6$, maximum variance $6$, covariance inflation scale $5.0$, and $16$ outlier prototypes. The training takes places over one single L40 GPU for approximately $24$ hours.

\paragraph{Curriculum Hyperparameters.} The GMM-only program encoder is not trained with easy/hard negatives due to all negatives being within the same prior family. However, we utilize a curriculum for the Mixed-prior program encoder. For curriculum-based contrastive training, we use a warm-up period of $3$ epochs before gradually increasing the hard-negative difficulty. The curriculum is controlled by the hard-negative sampling ratio $\rho_{\text{hard}}$, which is annealed from $\rho_{\text{start}}=0.1$ to $\rho_{\text{end}}=0.5$, and the hard-negative perturbation strength $\epsilon_{\text{hard}}$, which is annealed from an easier initial value to the target value $\epsilon_{\text{hard}}=0.25$. We set easy margin $m_{\text{easy}}=0.8$, hard margin $m_{\text{hard}}=0.4$, hard-negative weight $w_{\text{hard}}=1.8$, hard-mining weight $\lambda_{\text{mine}}=0.5$, proxy loss weight $\lambda_{\text{proxy}}=0.5$. This schedule exposes the encoder to easier negatives early in training and progressively introduces harder, more semantically similar negatives, encouraging stable coarse-to-fine representation learning.

\begin{table}[t]
\centering
\caption{Hyperparameters of the two program encoder (PE) architecture variants.}
\label{tab:program_encoder_hparams}
\scalebox{0.95}{
\begin{tabular}{lcc}
\toprule
\textbf{Hyperparameter} & \textbf{GMM-only PE} & \textbf{Mixed-prior PE} \\
\midrule
Number of PREFIX tokens & 1 & 10 \\
Maximum sequence length & 2048 & 2048  \\
Hidden dimension $d_{\text{model}}$ & 256 & 256 \\
Number of attention heads & 8 & 8 \\
Number of Transformer layers & 6 & 4 \\
Feedforward dimension & 256 & 256 \\
Dropout rate & 0.1 & 0.1 \\
\bottomrule
\end{tabular}
}
\end{table}

\begin{figure}[htbp]
    \centering
    \includegraphics[width=0.9\linewidth]{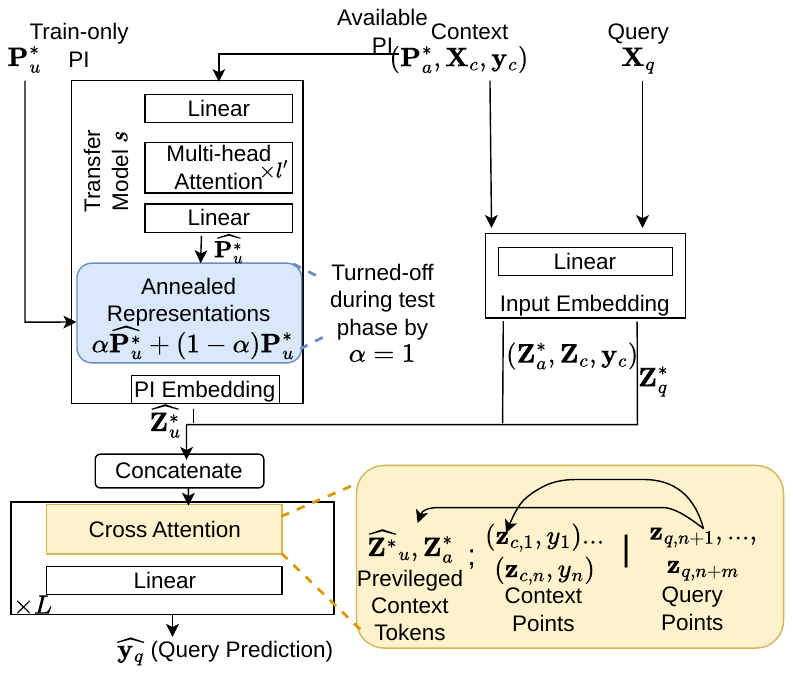}
    \caption{Illustration of our PIQL diagram: the teacher-student TFM and training to transfer PI.}
    \label{fig:illustration}
\end{figure}

\section{Privileged Transformer Architecture and Training with PI Transfer}
\label{sec:pitransfer}

Inspired by the knowledge transfer framework \cite{vapnik2017knowledge}, we propose \method to transfer generative privileged information into the foundation model. The model consists of two coupled components: 
\textbf{(1)} a transferring student model $s$ that learns the embedding of train-time-only
generative privileged information based on $(\mathbf{P}^*_a, \mathbf{X}_c, \mathbf{y}_c)$, and \textbf{(2)} a Transformer-based foundation
model (FM) $h$ that performs context-to-query prediction over an assembled token
sequence.

The transfer model $s$ takes the train-only privileged signal $\mathbf{P}^{*}_{u}$ and $(\mathbf{P}^*_a, \mathbf{X}_c, \mathbf{y}_c)$ to
map it to a set of hidden PI embeddings $\widehat{\mathbf{Z}^*_u}$, which are utilized as prepended context tokens into the transformer model. Since $\mathbf{P}^{*}_{u}$ is un-available during inference time, we aim to reduce the model's dependence
on this privileged information when learning the embedding $\widehat{\mathbf{Z}^*_u}$. To that end, we introduce a representation annealing mechanism.
Specifically, the privileged representation at training step $t$ is formed as
\begin{equation}
    \label{eq:mix}\widehat{\mathbf{P}}_{u}^{(t)}
    =
    (1-\alpha_t)\mathbf{P}^{*}_{u}
    +
    \alpha_t \widehat{\mathbf{P}^{*}_{u}},
\end{equation}
where $\widehat{\mathbf{P}^{*}_{u}}$ is the model-predicted approximation of the
privileged representation and $\alpha_t$ is increased during training. As such,
early training relies primarily on the true  privileged signal $\mathbf{P}^{*}_{u}$,
while later training increasingly relies on the learned approximation
$\widehat{\mathbf{P}^{*}_{u}}$.

The mixed representation $\widehat{\mathbf{P}}_{u}^{(t)}$ in Eq. \eqref{eq:mix} is embedded into
privileged tokens $\widehat{\mathbf{Z}_u^*}$. These tokens are concatenated
with available privileged/context tokens $\mathbf{Z}^{*}_{a}$, labeled context
tokens $\{(\mathbf{z}_{c,i}, y_i)\}_{i=1}^{n}$, and query tokens
$\{\mathbf{z}_{q,j}\}_{j=1}^{m}$:
\begin{equation}
    \mathbf{S}
    =
    \operatorname{Concat}
    \left(
    \widehat{\mathbf{Z}_u^*}, \; 
    \mathbf{Z}^{*}_{a}, \; 
    \{(\mathbf{z}_{c,i}, y_i)\}_{i=1}^{n}, \; 
    \{\mathbf{z}_{q,j}\}_{j=1}^{m}
    \right).
\end{equation}
The assembled sequence is then processed by the Transformer backbone  $h$(similar to the main backbone in \cite{ding2026outformer,shen2025fomod}). The context
tokens provide labeled in-context examples, while the privileged tokens provide
additional information about the data-generating process during training. The
decoder is applied only to the query-token outputs to obtain the final query
predictions:
\begin{equation}
    \widehat{\mathbf{y}}_{q}
    =
    h_{\theta}(\mathbf{S})_{\mathrm{query}} \;.
\end{equation}

\paragraph{Hyperparameters.}
We use a lightweight transfer model $s$ with $l^\prime=1$ to limit the number of additional parameters. It projects the 100-dimensional input features into a 256-dimensional space, where learned query tokens perform attention-based pooling over the input sequence. The pooled representation is then projected to a compact 256-dimensional output. The other detailed hyperparameter configurations can be found in the additional experiment details in Appendix \ref{sec:extraexp}.

\section{Theoretical Analysis}
\label{sec:theoryappx}

\input{04theory}

\section{Additional Experiments and  Details}
\label{sec:extraexp}

\input{05extraexp}

%% file: 06related.tex
\input{TAB/related}

Our work extends tabular foundation models (TFMs) with privileged information (PI). Specifically, we design and prepend the training points in the context window with PI tokens. As stated, it appears similar to other learning settings including prefix tuning, instruction tuning, personalization, predictions with auxiliary data, and conditional generation. Table \ref{tab:relatedcomparison} highlights the key differences between our problem and these existing lines of work, on which we elaborate in this section.

\textbf{Prefix-tuning} \citep{li2021prefix} aims to learn different soft-prompts per task while keeping the parameters of a pretrained model intact. As opposed to maintaining multiple model parameters, one per task, prefix tokens require a much smaller number of parameters.

\textbf{Soft-prompt optimization} \citep{lester2021power,chen2023unleashing} is created concurrently and shares the same objective. The difference is that prefix-tuning prepends a sequence of learned prefixes at every transformer layer while prompt tuning is simpler and limits prefixes to the input layer.
In both settings, \textit{the prepended tokens are learned} to optimize a specific task's performance, and unlike our setting,  there exist no external auxiliary input.

\textbf{Instruction tuning} \citep{wei2021finetuned,mishra2022cross}, in contrast, fine-tunes a pretrained model's parameters on a collection of tasks expressed via natural-language instructions.
Each training example includes an instruction that specifies the task, in addition to the input/output pairs, and the model is trained typically with a unified language modeling objective to follow these instructions and generalize to new ones. 
Unlike our train-time-only PI,  instruction tuning  relies on \textit{explicit instructions given at both training and inference time}.

In essence, these approaches target strong performance across a collection of distinct tasks. Prefix tuning and soft-prompt optimization use \textit{learned}, task-specific tokens while keeping the base model fixed, whereas instruction tuning relies on \textit{given}, task-specific natural-language instructions and updates the model parameters accordingly. Crucially, in all cases the prepended tokens are \textit{task-specific} and explicitly signal the \textit{known} task to the model; in contrast, in our setting the PI tokens implicitly characterize  the \textit{latent} underlying data distribution rather than specifying a task.

\textbf{Personalization} has been a central research topic for decades, particularly in recommender systems, well predating large language models \citep{adomavicius2005toward,ricci2010introduction}. The primary input space typically consists of user–item interactions, such as explicit ratings or implicit feedback (e.g., clicks, watch time). Personalization is often improved by augmenting these signals with auxiliary information about users (e.g., demographics, preferences) and items (e.g., attributes such as genre or director). In most cases, these auxiliary user/item attributes are obtained from \textit{external} sources, although in some cases they may also be inferred (e.g. buying \texttt{diapers} regularly implies \texttt{is-father},  watching \texttt{coming-of-age movies} implies \texttt{is-teenager}, etc.).  

The modern version of this setting appears in personalizing large language model (LLM) based dialogue generation to individual users \citep{ait2023power}. In such systems, user characteristics and preferences are recorded or inferred from historical interactions and retrieved, often from a memory module, to augment the model's context at generation time \citep{chu2018learning,li2025persona,jiang2025personamem}.

\textbf{Conditional generation} can be viewed as an instance of personalization for generative models, where LLMs and image generators produce outputs conditioned on auxiliary information such as user  preferences, instruction, or retrieved context \citep{mirza2014cgan,keskar2019ctrl,ramesh2021zero,rombach2022high}.

Similar to instruction tuning and personalization, the conditioning input is external and given at both training and inference time. Conditional generation focuses on instance-level conditioning, akin to personalization,  rather than distinct tasks as in instruction tuning.

\textbf{Learning with auxiliary data} is one of the most widely used strategies for improving model performance. At its core, this paradigm seeks to incorporate information beyond the original input space by leveraging additional data sources. A prominent recent trend is multi-modal fusion, where heterogeneous modalities are combined to enhance performance. Representative examples include time series forecasting augmented with textual annotations \citep{lim2021tft, zhang2025does}, joint text–image representation learning \citep{radford2021clip}, and vision–language pretraining frameworks \citep{kim2021vilt,li2023blip2}.

Across these paradigms, a unifying theme is the use of additional information beyond the raw input to improve learning. However, they differ fundamentally in the nature, availability, and role of this auxiliary information. In particular, none of the aforementioned settings is designed for tabular foundation models, nor do they explicitly operate under a \textit{train-time-only} information regime. In contrast, our framework is grounded in the LUPI paradigm, where privileged information is available exclusively during training and is not required at inference time. This distinction is critical: rather than serving as persistent conditioning input (as in personalization, conditional generation, or auxiliary-data learning), or as task-defining signals shared across train and test (as in instruction tuning and prefix-based methods), our PI acts as a transient learning signal that accelerates learning and improves generalization of TFMs. Our auxiliary information, namely the designed privileged information comprising dataset meta-features and generator program specifications, distinctly encodes  latent structure in the data distribution, enabling more efficient learning of tabular predictors under a foundation model paradigm.

%% file: TAB/related.tex
\begin{table*}[!h]
\centering
\caption{Comparison of learning paradigms by input source, task structure, and task--input assignment.
Our setting uniquely differs from related lines of work in that 
(1) the additional input, called privileged information (PI) in our case, is not sourced externally but represents \textbf{internal} distribution characteristics derived empirically (meta-features) or theoretically (generator program); (2) the latter, generator-PI is \textbf{train-time-only} while other settings use external input at both training and inference;
(3) PI represents \textbf{given}, and not learned, prefix tokens;
(4) PI is employed at instance level \textbf{per dataset}, rather than being task-specific;
(5) PI enriches the training data in the context by representing \textbf{distinct yet latent} distributions as opposed to different but known tasks.
}
\vspace{-0.05in}
\centering
\begin{tabular}{llcccc} 
\toprule
\hspace{1.75cm}/ \textbf{Setting}
& \textbf{Extra}
& \textbf{Train-} 
& \textbf{Given vs.}
& \textbf{Instance- vs. }
& \textbf{Task--Input} \\ 
\textbf{Problem} 
& \textbf{Input}
&  \textbf{only?}
& \textbf{Learned}
& \textbf{Task-specific}
& \textbf{Assignment} \\ 
\midrule
Prefix-tuning
& N/A 
& No
& Learned
& Task
& Known \\

Soft-prompt opt.
& N/A 
& No
& Learned
& Task
& Known \\

Instruction tuning
& External
& No
& Given 
& Task
& Known \\

Personalization
& External
& No
& Given
& Instance
& N/A \\

Conditional generation
& External
& No
& Given 
& Instance
& N/A \\

Prediction with aux. data
& External
& No
& Given
& Instance
& N/A \\

\midrule
{[Privileged TFMs]}
& Internal
& Yes
& Given 
& Instance
& Latent \\
\bottomrule
\end{tabular}
\label{tab:relatedcomparison}
\end{table*}

%% file: 04theory.tex
Let the prior define a space of hypotheses $\Psi$ on the relationship of a set of features to the output label. Each hypothesis $\psi \in \Psi$ can be seen as a mechanism that generates a data
distribution from which we can draw samples forming a dataset $\mathcal{D}$. 
Let $\psi \sim P_\Psi$ denote the parameters of a hypothesis sampled from the data prior (generator).
Then, synthetic datasets are sampled as
\[
\mathcal{D} = \{(\bx_i, y_i)\}_{i=1}^n \sim P(\cdot \mid \psi)\;.
\]

A tabular foundation model, denoted $h_\theta$ with parameters $\theta$, is a meta-learner that is trained on numerous datasets to learn the mapping 
\[
h_\theta : (\mathcal{D}_{\text{context}}, \bx_{\text{query}}) \mapsto {y}_{\text{query}} \;.
\]

Let $\mathcal{H}$ denote the hypothesis class induced by a fixed transformer
architecture (fixed depth, width, attention heads, and context length):
\[
\mathcal{H} = \{ h_\theta(\mathcal{D}, \bx): \theta \in \Theta \}\;.
\]

The expected meta-risk is written as
\[
R(h) =
\mathbb{E}_{\psi \sim P_\Psi}
\mathbb{E}_{\mathcal{D} \sim P(\cdot \mid \psi)}
\mathbb{E}_{(\bx,y) \sim \mathcal{D}}
\left[ \ell(h(\mathcal{D}, \bx), y) \right]\;.
\]

Let $h^*$ be the best possible decision function (in terms of generalization error) in $\mathcal{H}\textbf{}$, and let
\[
R^* = \inf_{h \in \mathcal{H}} R(h) = R(h^*)\;.
\]

Let $\mathcal{P}^*$ denote the privileged information (PI) associated with a data set constructed by the teacher.
Define the augmented hypothesis class
\[
\mathcal{H}_{\text{PI}} =
\{ h_\theta(\mathcal{P}^*, \mathcal{D}, \bx) : \theta \in \Theta_{\text{PI}} \} \;,
\]

and let 
\[
R_{\text{PI}}^* = \inf_{h \in \mathcal{H}_{\text{PI}}} R(h) \;.
\]

\subsection{PI-Driven  Representational Advantage}
\label{ssec:population}

In this section, we establish the theoretical potential of our approach, 
by characterizing the conditions under which privileged information (PI) narrows the population-level approximation gap between a standard learner and a PI-augmented hypothesis class.

\subsubsection{Meta-features based Privileged Information}

Let $\mathcal{P}^*=\bs(\mathcal{D})$ denote a set of meta-features comprising summary statistics computed
from the context dataset $\mathcal{D}$ (e.g., empirical mean, higher moments, etc.).

Although $\bs(\mathcal{D})$ is a deterministic function of $\mathcal{D}$,
the class $\mathcal{H}$ may not contain functions that reliably compute 
or represent $\bs(\mathcal{D})$ internally under fixed architectural constraints
(e.g., bounded depth, width, and attention heads).

In such cases,
$\mathcal{H} \subsetneq \mathcal{H}_{\text{PI}}$, which implies
$R_{\text{PI}}^* \le R^*$.

Thus, summary statistics can reduce achievable asymptotic meta-risk
relative to the fixed architecture, even though they do not introduce
new information in an information-theoretic sense.

\vspace{0.05in}
\begin{theorem}[Model-Relative Risk Reduction via Meta-PI under Architectural Constraints]
\label{thm:meta}
Let us define the set of \textit{useful} teachers as 
\begin{align}
\label{def:usefulness}
  {\Phi}_u = & \{\phi \in \Phi \;|\; \ell_{(\mathcal{P}^*,\mathcal{D})}\!\left(\phi(\mathcal{P}^*, \mathcal{D},  \bx)\right) \approx \ell_{\mathcal{D}}\!\left(h^*(\mathcal{D},  \bx)\right) \tag{1}\\  
    & \text{ for any } (\mathcal{P}^*, \mathcal{D},  \bx) \text{ with non-zero probability.} \} \nonumber 
\end{align}
In words, useful teacher (correcting) functions can, using the additional privileged information,  emulate the oracle corrections of the best-in-class learner for typical inputs, and therefore have an approximation error smaller than that of the learner.

Assume:

(A1) ${\Phi}_u \neq \emptyset$, i.e., 
there exists a useful teacher(s).

(A2) For each useful teacher $\phi \in \Phi_u$, there exists a decision function $h\in \mathcal{H}$ such that, for any  $(\mathcal{P}^*, \mathcal{D},  \bx)$ with non-zero probability, 
$$\ell_{(\mathcal{P}^*, \mathcal{D})}\!\left(\phi(\mathcal{P}^*, \mathcal{D},  \bx)\right) \geq \ell_{\mathcal{D}}\!\left(h(\mathcal{D},  \bx)\right) \;.$$ 
That is, the learner's hypothesis class can implement the teacher space, where every useful teacher  corresponds to a realizable learner that matches the teacher's predictions on the non-privileged inputs. 
Informally, useful teachers are not ``too good'' for the learner class.

(A3) There exists $\epsilon > 0$ such that for all
$h \in \mathcal{H}$,
\[
\mathbb{E}_{\mathcal{D}  \sim P(\cdot \mid \psi)}
\| \bz_h(\mathcal{D}) - \bs(\mathcal{D}) \|^2
\ge \epsilon,
\]
where $\bz_h$ denotes the implicit dataset-level representation computed by $h$.
That is, the learner class cannot approximate the meta-features arbitrarily well under fixed architecture constraints.

Then, under assumptions (A1)--(A3),
\[
R_{\text{PI}}^* \le R^* - c \epsilon
\]
for some constant $c > 0$ that depends on the sensitivity of the loss to the meta-features.
\end{theorem}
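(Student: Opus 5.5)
The plan is to compare the best base learner $h^*$ against an explicitly constructed PI-augmented predictor that reads the meta-features $\bs(\mathcal{D})$ directly from the prefix tokens rather than through its internal representation $\bz_h(\mathcal{D})$. To do this, I will factor every $h\in\mathcal{H}$ as a head applied to its dataset-level representation, $h(\mathcal{D},\bx)=g_h(\bz_h(\mathcal{D}),\bx)$. The PI-augmented counterpart is $\tilde h(\mathcal{P}^*,\mathcal{D},\bx)=g(\bs(\mathcal{D}),\bx)$ for a suitable head $g$.

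\textbf{Step 1 (realizability in $\mathcal{H}_{\text{PI}}$).} I will show that $\mathcal{H}\subseteq\mathcal{H}_{\text{PI}}$, by letting the network ignore the prefix tokens; this gives $R^*_{\text{PI}}\le R^*$. I will then use (A1) to pick a useful teacher $\phi\in\Phi_u$, which lies in $\mathcal{H}_{\text{PI}}$ because it consumes $(\mathcal{P}^*,\mathcal{D},\bx)$. By (A2) there is a realizable $h\in\mathcal{H}$ whose pointwise loss is dominated by that of $\phi$. This ensures that the comparison is between predictors on the same non-privileged inputs, and that the gain cannot come from $\mathcal{H}_{\text{PI}}$ being a strictly richer function class over $(\mathcal{D},\bx)$. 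Consequently, any gap must come from the representation channel, i.e.\ the difference between $\bz_h$ and $\bs$.

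\textbf{Step 2 (quantify the representation gap in risk).} This is the key step and the main obstacle. I will need a sensitivity condition on the loss, which is what the constant $c$ is meant to encode. Concretely, let $g^\star$ be the optimal head given the true meta-features. I will require the conditional risk $r(\bz)=\mathbb{E}\,[\ell(g^\star(\bz,\bx),y)\mid\mathcal{D}]$ to be locally $2c$-strongly convex around $\bz=\bs(\mathcal{D})$, with its minimum attained there. This yields, for every $h\in\mathcal{H}$,
\[
R(h)\;\ge\;R(g^\star\circ\bs)+c\,\mathbb{E}_{\psi}\mathbb{E}_{\mathcal{D}\mid\psi}\|\bz_h(\mathcal{D})-\bs(\mathcal{D})\|^2 .
\]
I would first try to justify this by a second-order Taylor expansion of the loss in the representation, using the fact that the first-order term vanishes at the optimum. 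Failing that, I will state it as the precise meaning of ``sensitivity of the loss to the meta-features'' in the theorem.

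\textbf{Step 3 (conclude).} I will integrate (A3), which holds for every $\psi$, over $\psi\sim P_\Psi$, so that the expected squared representation error is at least $\epsilon$ for every $h\in\mathcal{H}$. Taking the infimum over $h$ then gives $R^*\ge R(g^\star\circ\bs)+c\epsilon$. Since $g^\star\circ\bs$ reads $\bs(\mathcal{D})$ from $\mathcal{P}^*$, it belongs to $\mathcal{H}_{\text{PI}}$, realized by the useful teacher from Step~1. This yields $R^*_{\text{PI}}\le R(g^\star\circ\bs)\le R^*-c\epsilon$.
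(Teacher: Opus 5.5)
Your skeleton is the paper's own. Its Step 4 simply asserts your Step 2 inequality, $R(h)\ge R(\phi)+c\,\mathbb{E}\|\bz_h(\mathcal{D})-\bs(\mathcal{D})\|^2$ for all $h\in\mathcal{H}$, justified only by ``useful predictors depend non-trivially on $\bs(\mathcal{D})$''. It then combines this with (A3), an infimum over $\mathcal{H}$, and $R^*_{\text{PI}}\le R(\phi)$. You are right that such a sensitivity hypothesis is indispensable: with labels independent of everything, (A1)--(A3) can all hold while $R^*_{\text{PI}}=R^*$. However, your derivation of it fails. Strong convexity of $r(\bz)=\mathbb{E}[\ell(g^\star(\bz,\bx),y)\mid\mathcal{D}]$ only controls the fixed head $g^\star$ applied to a distorted representation. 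But $h=g_h\circ\bz_h$ uses its own head, which can undo the distortion. Take $\bz_h=2\bs$ and $g_h(\bz,\bx)=g^\star(\bz/2,\bx)$: then $h=g^\star\circ\bs$, so $R(h)=R(g^\star\circ\bs)$, although $\mathbb{E}\|\bz_h-\bs\|^2=\mathbb{E}\|\bs\|^2$ may exceed $\epsilon$. Euclidean distance to $\bs$ in fixed coordinates does not measure lost information, and a Taylor expansion would in any case give only a local quadratic bound. So the displayed inequality has to be imposed directly as a hypothesis on all of $\mathcal{H}$, which is what the paper implicitly does.

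Even granting that, there is a second gap, shared with the paper: (A1)--(A2) cannot play the role you give them. Suppose $g^\star\circ\bs$ is ``realized by the useful teacher'' $\phi$. Then (A2) supplies $h\in\mathcal{H}$ with $\ell(h(\mathcal{D},\bx))\le\ell(\phi(\mathcal{P}^*,\mathcal{D},\bx))$ pointwise, hence $R(h)\le R(g^\star\circ\bs)$. This contradicts your bound $R(h)\ge R(g^\star\circ\bs)+c\epsilon$. Likewise, the definition of $\Phi_u$ gives $R(\phi)\approx R^*$, which is incompatible with $R^*\ge R(\phi)+c\epsilon$. In general, (A2) forces $R(\phi)\ge R^*$ for every useful teacher, so no useful teacher can witness a strict gain. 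In the paper, Step 2 gives $R(\phi)\ge R(h_\phi)\ge R^*$, while Step 6 needs $R(\phi)\le R^*-c\epsilon$. Your argument, like the paper's, therefore runs on jointly inconsistent hypotheses and proves the bound only vacuously. Separately, $\phi\in\mathcal{H}_{\text{PI}}$ does not follow from $\phi$ having the right input signature, since $\Phi$ and the fixed-architecture class $\mathcal{H}_{\text{PI}}$ are different objects. A coherent version removes (A1)--(A2) from the chain. Assume instead that $g^\star\circ\bs\in\mathcal{H}_{\text{PI}}$ and that $R(h)\ge R(g^\star\circ\bs)+c\,\mathbb{E}\|\bz_h-\bs\|^2$ for all $h\in\mathcal{H}$. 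Then (A3), integrated over $\psi$, together with an infimum gives $R^*_{\text{PI}}\le R(g^\star\circ\bs)\le R^*-c\epsilon$ immediately.
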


\begin{proof}

We show that the PI-augmented class achieves strictly lower population risk
by separating (i) realizability of useful PI-dependent predictors and
(ii) irreducible representation error in $\mathcal{H}$.

\vspace{0.5em}
\noindent
\textbf{Step 1: Existence of a useful PI-dependent predictor.}

By (A1), there exists a useful teacher $\phi \in \Phi_u$ such that its
risk is comparable to the best-in-class non-PI predictor:
\[
R(\phi) \approx R(h^*).
\]

Let $\phi$ act as a reference PI-dependent predictor.

\vspace{0.5em}
\noindent
\textbf{Step 2: Alignment with the learner class.}

By (A2), for this $\phi \in \Phi_u$, there exists
$h_\phi \in \mathcal{H}$ such that for all
$(\mathcal{P}^*, \mathcal{D}, \bx)$,
\[
\ell_{(\mathcal{P}^*, \mathcal{D})}
\big(\phi(\mathcal{P}^*, \mathcal{D}, \bx)\big)
\;\ge\;
\ell_{\mathcal{D}}\big(h_\phi(\mathcal{D}, \bx)\big).
\]

Taking expectations over the data distribution yields
\[
R(\phi) \ge R(h_\phi).
\]

Hence, every useful PI-dependent predictor has a counterpart in
$\mathcal{H}$ that is not worse when restricted to non-privileged inputs.

{This step ensures that any advantage of $\phi$ is not due to
a strictly richer hypothesis class, but must arise from how PI modifies
the representation available during learning.}

\vspace{0.5em}
\noindent
\textbf{Step 3: Representation bottleneck in $\mathcal{H}$.}

Each $h \in \mathcal{H}$ induces a dataset representation $\bz_h(\mathcal{D})$.
By (A3),
\[
\mathbb{E}_{\mathcal{D}}
\|\bz_h(\mathcal{D}) - \bs(\mathcal{D})\|^2 \ge \epsilon.
\]

That is, no $h \in \mathcal{H}$ can recover the meta-features
$\bs(\mathcal{D})$ arbitrarily well under the fixed architecture.
Thus, predictors in $\mathcal{H}$ must act on a distorted version of the
input-relevant meta-features.

\vspace{0.5em}
\noindent
\textbf{Step 4: Risk gap induced by representation distortion.}

Because useful predictors in $\Phi_u$ depend non-trivially on
$\bs(\mathcal{D})$, there exists $c > 0$ such that for any $h \in \mathcal{H}$,
\[
R(h) \ge R(\phi) + c \cdot
\mathbb{E}_{\mathcal{D}}
\|\bz_h(\mathcal{D}) - \bs(\mathcal{D})\|^2.
\]

Using (A3), this implies
\[
R(h) \ge R(\phi) + c \epsilon,
\quad \forall h \in \mathcal{H}.
\]

Taking the infimum over $\mathcal{H}$ yields
\[
R^* \ge R(\phi) + c \epsilon. \tag{1}
\]

\vspace{0.5em}
\noindent
\textbf{Step 5: PI alleviates the representation bottleneck.}

In $\mathcal{H}_{\text{PI}}$, the predictor has direct access to
$\mathcal{P}^* = \bs(\mathcal{D})$, eliminating the representation gap.
Hence there exists $h_{\text{PI}} \in \mathcal{H}_{\text{PI}}$ such that
\[
R(h_{\text{PI}}) \approx R(\phi),
\quad \Rightarrow \quad
R_{\text{PI}}^* \le R(\phi). \tag{2}
\]

\vspace{0.5em}
\noindent
\textbf{Step 6: Combining the bounds.}

Combining inequalities (1) and (2), we get 
\[
R_{\text{PI}}^* \le R(\phi) \le R^* - c \epsilon,
\]
which yields
\[
\boxed{R_{\text{PI}}^* \le R^* - c \epsilon} \;.
\]
\end{proof}

In short, Theorem \ref{thm:meta} shows that the augmented hypothesis class can achieve reduced expected (meta-)risk, that is $R_{\text{PI}}^* < R^*$, assuming \textbf{(A1)} Useful teachers exist 
($\Phi_u \neq \emptyset$), 
\textbf{(A2)} Learner class can mimic any useful teacher ($\mathcal{H}$ can realize $\Phi_u$), and 
and 
\textbf{(A3)} Learner class cannot recover meta-features exactly ($\bz_h(\mathcal{D})$ differs from $\bs(\mathcal{D})$
 by at least $\epsilon$ in expectation).

\subsubsection{Generator based 
 Privileged Information}

Next we move to the true LUPI case:
generator program available during pretraining (i.e. meta-training) but not at inference time.
This approach extends the utility of the meta-features as PI, as it enables information-theoretic gains in the asymptotic limit, beyond architecture-relative improvement.

\vspace{0.05in}
\begin{theorem}[Information-Theoretic Risk Reduction via Generator based PI]
\label{thm:generator}

Assume: (A1) and (A2) in Theorem \ref{thm:meta}, as well as

(A3) The conditional entropy $H(\psi \mid \mathcal{D})$,  which measures the remaining uncertainty about the generator (specification and parameter values)
after observing the dataset, is positive\footnote{For continuous $\psi$, this condition corresponds to the assumption
$\mathrm{Var}(\psi \mid \mathcal{D}) > 0$ {with positive probability}.}, that is,
\[
H(\psi \mid \mathcal{D})
=
\mathbb{E}_{\mathcal{D}}
\left[
- \int p(\psi \mid \mathcal{D})
\log p(\psi \mid \mathcal{D})
\, d\psi
\right] > 0 \;.
\]

Then, under assumptions (A1)–(A3), 
\[
R_{\text{PI}}^* < R^*.
\]

\end{theorem}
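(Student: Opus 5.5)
We follow the template of the proof of Theorem \ref{thm:meta}, replacing the architectural representation bottleneck (A3 there) with an information-theoretic gap coming from $H(\psi \mid \mathcal{D})>0$. The PI here is $\mathcal{P}^* = \mathrm{Enc}(\mathrm{serialize}(\mathcal{G}))$, which we treat as a (near-)sufficient statistic for $\psi$. The reference object is the Bayes-optimal predictive distribution. Without PI, the best any $h(\mathcal{D},\bx)$ can do is the PPD in \eqref{eq:ppd},
\[
p(y\mid \bx,\mathcal{D})=\int p(y\mid \bx,\psi)\,p(\psi\mid\mathcal{D})\,d\psi .
\]
With generator PI, the oracle predictor is $p(y\mid \bx,\psi)$. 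Under log-loss, as in \eqref{eq:lpfn}, the risk difference between these two oracles equals the conditional mutual information
\[
\mathbb{E}\big[\mathrm{KL}\big(p(\cdot\mid\bx,\psi)\,\|\,p(\cdot\mid\bx,\mathcal{D})\big)\big]=I(y;\psi\mid \bx,\mathcal{D}).
\]

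\textbf{Step 1.} By (A1), pick a useful teacher $\phi\in\Phi_u$. By (A2) there is $h_\phi\in\mathcal{H}$ with $R(h_\phi)\le R(\phi)$, exactly as in Step 2 of Theorem \ref{thm:meta}. This certifies that any gain comes from information and not from a richer function class.

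\textbf{Step 2.} Lower-bound $R^*$. Every $h\in\mathcal{H}$ is a function of $(\mathcal{D},\bx)$ only, so $R(h)\ge R_{\mathrm{Bayes}}(\mathcal{D})$, the conditional entropy $H(y\mid\bx,\mathcal{D})$ under log-loss. Similarly, $R_{\text{PI}}^*$ is bounded by the risk of the best PI-dependent member. Since $\mathcal{H}_{\text{PI}}\supseteq\mathcal{H}$ (a PI-model can ignore $\mathcal{P}^*$), we have $R_{\text{PI}}^*\le R^*$. Strictness requires exhibiting $h_{\text{PI}}\in\mathcal{H}_{\text{PI}}$ that strictly beats $h^*$. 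We construct it by reweighting $h^*$'s predictive output toward $p(y\mid\bx,\psi)$ using $\mathcal{P}^*$, e.g. a mixture $(1-\lambda)h^*+\lambda\phi$ with small $\lambda>0$. Its risk derivative at $\lambda=0$ is negative whenever $\phi$ carries information about $y$ beyond $h^*$.

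\textbf{Step 3 (main obstacle).} We must convert (A3), $H(\psi\mid\mathcal{D})>0$, into $I(y;\psi\mid\bx,\mathcal{D})>0$. This is false in general: residual uncertainty in $\psi$ could lie in directions irrelevant to the label, i.e. be unidentifiable components of $\psi$ that do not affect $p(y\mid\bx,\psi)$. To close this gap, we will add a mild identifiability/relevance condition: the map $\psi\mapsto p(\cdot\mid\bx,\psi)$ is injective on a set of positive measure. With this condition, posterior spread of $\psi$ implies posterior spread of $p(y\mid\bx,\psi)$. Strict convexity of the log-loss (Jensen's gap for the mixture in the PPD) then yields a strictly positive KL on a positive-probability event, hence a positive gap $\delta$. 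The footnote's variance version ($\mathrm{Var}(\psi\mid\mathcal{D})>0$ with positive probability) is what makes this event non-null.

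\textbf{Step 4.} Combine the pieces: $R_{\text{PI}}^*\le R(h_{\text{PI}})\le R^*-\delta<R^*$. Here "$\approx$" in the usefulness definition is read as allowing $\phi$ to attain the oracle risk up to slack smaller than $\delta$, in the same approximate sense as Step 5 of Theorem \ref{thm:meta}.
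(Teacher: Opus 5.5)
\textbf{The gap: nothing puts a near-oracle predictor in $\mathcal{H}_{\text{PI}}$.} Your Step 4 needs some $h_{\text{PI}}\in\mathcal{H}_{\text{PI}}$ whose risk is within less than $\delta$ of the oracle risk $R(h^\psi)$, i.e.\ strictly below $R(h^{\mathrm{Bayes}})$. Nothing you invoke supplies one.

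\emph{The Step 2 mixture does not work.}
\begin{itemize}
\item $(1-\lambda)h^*+\lambda\phi$ is not known to lie in $\mathcal{H}_{\text{PI}}$. A fixed-architecture class need not be closed under output mixtures, and $\Phi\subseteq\mathcal{H}_{\text{PI}}$ is not assumed.
\item Its derivative at $\lambda=0$ is $1-\mathbb{E}[\phi(y)/h^*(y)]$. This is negative when $\phi$ is the exact PI-conditional law and differs from $h^*$, but not for an arbitrary ``informative'' $\phi$.
\item A small-$\lambda$ step gives an unquantified $O(\lambda)$ gain that uses neither (A3) nor your $\delta$, so it does not feed Step 4.
\end{itemize}

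\emph{The reading of (A1) contradicts the hypotheses.} You read ``$\approx$'' as ``$\phi$ is within $s<\delta$ of the oracle.'' But the usefulness definition ties $\phi$'s loss to that of $h^*$, not $h^\psi$. Also, your own Step 1 gives $h_\phi\in\mathcal{H}$, a function of $(\mathcal{D},\bx)$ only, with $R(h_\phi)\le R(\phi)$. Hence $R(\phi)\ge R(h^{\mathrm{Bayes}})=R(h^\psi)+\delta$. Under your reading, (A1), (A2) and your relevance condition therefore cannot hold simultaneously, and the argument is vacuous. So (A2) does not certify that the gain is informational; it certifies that a useful teacher cannot beat $R^*$ at all. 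With the main-text definition $\Phi_u=\{\phi: R(\phi)\le R(h^*)\}$, (A1) and (A2) force $R(\phi)=R^*$.

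\emph{What would close it.} Replace (A2) by an explicit realizability hypothesis. For example: $\mathcal{P}^*$ determines $p(\cdot\mid\bx,\psi)$, and $\mathcal{H}_{\text{PI}}$ contains a predictor with risk at most $R(h^\psi)+s$ for some $s<\delta$. Then $R_{\text{PI}}^*\le R(h^\psi)+s<R(h^{\mathrm{Bayes}})\le R^*$. The margin is $\delta-s$, not the $\delta$ claimed in Step 4.

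\textbf{Comparison with the paper.} The paper's proof has exactly your skeleton, $R_{\text{PI}}^*\le R(h^\psi)<R(h^{\mathrm{Bayes}})\le R^*$. Its Step 2 makes the same unsupported leap, $R(h_{\text{PI}})\approx R(h^\psi)\Rightarrow R_{\text{PI}}^*\le R(h^\psi)$, and it uses (A2) only rhetorically.

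Your Step 3 is correct and sharper than the paper, whose Step 1 takes the strict Bayes--oracle gap directly from (A3). The statement as written is false without a relevance condition. Let $\psi=(\psi_1,\psi_2)$, with $\psi_1$ a.s.\ determined by $\mathcal{D}$ and $\psi_2$ an independent, non-degenerate coordinate that does not affect the data distribution. Then $H(\psi\mid\mathcal{D})>0$, yet $h^{\mathrm{Bayes}}=h^\psi$. If $\mathcal{H}$ contains the Bayes predictor, (A1)--(A2) hold with a teacher that ignores $\mathcal{P}^*$ and equals $h^*$. Meanwhile $R_{\text{PI}}^*\ge R(h^\psi)=R^*$, so the conclusion fails.

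Two refinements to your Step 3 condition:
\begin{itemize}
\item State it directly as $I(y;\psi\mid\bx,\mathcal{D})>0$. This is for log-loss; for a general $\ell$ such as 0--1 loss, you need the Bayes action itself to vary over the posterior.
\item ``Injective on a set of positive measure'' is too weak, since the posterior spread can sit entirely on parameters where the map is constant.
\end{itemize}
Once you assume this condition, (A3) is redundant, because $I(y;\psi\mid\bx,\mathcal{D})\le H(\psi\mid\mathcal{D})$ for discrete $\psi$.
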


\begin{proof}

We show that access to generator $\mathcal{P}^*$ during training induces a hypothesis that encodes a more accurate inference of the latent generator $\psi$, leading to strictly lower population risk at test time, even though $\mathcal{P}^*$ is not available during inference.

\vspace{0.5em}
\noindent
\textbf{Step 1: Bayes-optimal prediction without PI.}

Given a dataset $\mathcal{D}$, any predictor without PI must act under
the posterior $p(\psi \mid \mathcal{D})$.
Define the Bayes-optimal decision rule
\[
h^{\mathrm{Bayes}}(\mathcal{D}, \bx)
:=
\arg\min_{\widehat{y}}
\mathbb{E}_{\psi \sim p(\psi \mid \mathcal{D})}
\;\mathbb{E}_{y \sim p(\cdot \mid \bx, \psi)}
\big[\ell(\widehat{y}, y)\big].
\]

By (A3), $H(\psi \mid \mathcal{D}) > 0$, so the posterior is non-degenerate
with positive probability. Hence $h^{\mathrm{Bayes}}$ must average over
multiple plausible generators, and therefore incurs strictly higher
risk than the oracle predictor that knows $\psi$. Then,
\[
R(h^\psi) < R(h^{\mathrm{Bayes}}),
\quad \text{and} \quad
R^* \ge R(h^{\mathrm{Bayes}})\;, \tag{1}
\]
where
\[
h^{\psi}(\mathcal{D}, \bx)
:=
\arg\min_{\widehat{y}}
\mathbb{E}_{y \sim p(\cdot \mid \bx, \psi)}
\big[\ell(\widehat{y}, y)\big].
\]

\vspace{0.5em}
\noindent
\textbf{Step 2: PI enables learning of generator-dependent predictors.}

Privileged information $\mathcal{P}^*$ provides additional information
about $\psi$ at training time. By (A1), there exists a useful teacher
$\phi \in \Phi_u$ that exploits $\mathcal{P}^*$ to approximate
oracle corrections associated with $\psi$.

Training with $\mathcal{P}^*$ enables the learner to construct a predictor
that approximates the mapping $\mathcal{D} \mapsto \psi$ and thereby
approximates $h^{\psi}$.
This induces a parameterization within $\mathcal{H}_{\text{PI}}$ that
approximates $h^\psi$ more closely than any predictor trained without PI:
\[
R(h_{\text{PI}}) \approx R(h^{\psi}),
\quad \Rightarrow \quad
R_{\text{PI}}^* \le R(h^{\psi}). \tag{2}
\]

\vspace{0.5em}
\noindent
\textbf{Step 3: Elimination of hypothesis-class confounding.}

By (A2), for the useful teacher $\phi \in \Phi_u$, there exists
$h_\phi \in \mathcal{H}$ such that its behavior over $(\mathcal{D}, \bx)$
is compatible with the base hypothesis class.

Thus, any improvement induced by $\phi$ cannot be attributed to a
strict increase in the expressivity of $\mathcal{H}$, but must instead
arise from improved inference of the latent generator $\psi$.

\vspace{0.5em}
\noindent
\textbf{Step 4: Combining the bounds.}

Combining inequalities in (1) and (2) above, we get 
\[
R_{\text{PI}}^* \le R(h^{\psi})
< R(h^{\mathrm{Bayes}})
\le R^*,
\]
which implies
\[
\boxed{
R_{\text{PI}}^* < R^*} \;.
\]

\end{proof}

In words, Theorem \ref{thm:generator} states that the generator based PI can reduce asymptotic Bayes risk
assuming \textbf{(A1)} the teacher space encodes the oracle loss surface, \textbf{(A2)} the teacher   space is within the representational limits of the learner class, and 
\textbf{(A3)} the generator prior and parameters are not identifiable from the observed dataset (i.e. multiple generators remain plausible given the dataset).

\textbf{Remarks:~}
In summary, (A1) and (A2) capture the usefulness and realizability of the privileged information: (A1) asserts that the teacher equipped with PI can emulate oracle corrections, while (A2) ensures that the learner can exploit the guidance during training. 
For example, summary statistics of a dataset, such as mean or variance, can aid with outlier detection. Similarly, knowing the task type (e.g., outlier detection under a GMM prior) via the generator parameters can allow the model to adapt its internal representations or weight updates according to the prior-specified distribution.

In both Meta-PI and Generator-PI cases, a key assumption is (A3), that \textbf{privileged information cannot be recovered from the context alone}, which is what allows it to strictly reduce risk. 
For \textbf{meta-features} as PI, (A3) reflects a \textbf{model-limited} scenario: even though the information is in principle computable from the dataset, the model cannot extract it efficiently due to architectural or in-context capacity constraints. 
For \textbf{generator program} as PI, (A3) reflects a \textbf{data-limited} scenario: the privileged parameters cannot be inferred from a limited dataset context, so they provide information that is fundamentally inaccessible without teacher guidance.

Theorems 3.1 and 3.2 characterize a \textit{population-level} approximation gap between the learner and the PI-augmented hypothesis class, outlining conditions under which the teacher provides a \textbf{strictly lower achievable risk} due to enhanced representational capacity.

In the following, we consider a \textit{finite-sample} convergence analysis, where we study learning rates and estimation effects,  identifying conditions under which the teacher enables \textbf{faster convergence} to this improved risk.

\input{031convergence}

%% file: 031convergence.tex
\subsection{Finite-Sample Learning and Convergence}
\label{ssec:finite}

Next, we analyze the empirical convergence properties of our approach in finite-data regimes, examining the  rate at which the model realizes the benefits of PI in practice.

First, assume
that the learner $h\in \mathcal{H}$ may learn the true function $t$ at a slow rate $\alpha_h=1/2$,
\begin{equation}
\label{eq:lonestudent}
    R(h) - R(t) \le \mathcal{O} \bigg({\frac{|\mathcal{H}|_{\text{VC}}}{\sqrt{n}}}\bigg) + \epsilon_h \;,
\end{equation}

where $\mathcal{O}(\cdot)$ term is the estimation error and $\epsilon_h$ is the approximation error of the learner
class $\mathcal{H}$ with respect to the target function $t$.

Second, assume that a teacher $\phi\in \Phi$ can learn at a faster rate $\beta > \alpha_t$ than the learner; 
\begin{equation}
\label{eq:teacher}
R(\phi) - R(t) \le \mathcal{O} \bigg({\frac{|{\Phi}|_{\text{VC}}}{n^\beta}}\bigg) + \epsilon_\phi \;,
\end{equation}
where
$\epsilon_\phi$ is the approximation error of the teacher function 
class ${\Phi}$ with respect to the target  $t$.

Finally,
assume that the learner can match the teacher (correcting) function at a rate $\alpha_\phi$ 
\begin{equation}
\label{eq:matchingstudent}
R(h) - R(\phi) \le \mathcal{O} \bigg(\frac{|\mathcal{H}|_{\text{VC}}}{n^{\alpha_\phi}}\bigg) + \epsilon_s \;,
\end{equation}
where $\frac{1}{2}\leq \alpha_\phi \leq 1$ and $\epsilon_s$ (`s' for student; learner is called as student when mimicking the teacher) denotes the approximation error of the learner
class $\mathcal{H}$ with respect to the teacher function $\phi\in \Phi_u$.

Summing the inequalities in \eqref{eq:teacher} and \eqref{eq:matchingstudent}
provides an alternative upper bound to the LHS in \eqref{eq:lonestudent}:
\begin{equation}
\label{eq:guidedstudent}
    R(h) - R(t) \le \mathcal{O} \bigg({\frac{|\mathcal{H}|_{\text{VC}} + |{\Phi}|_{\text{VC}}}{n^{\min\{\beta,\alpha_\phi\}}}}\bigg) + \epsilon_\phi + \epsilon_s \;.
\end{equation}

Benefits of learning with a teacher arise, i.e., RHS of \eqref{eq:guidedstudent} becomes lower than the RHS of \eqref{eq:lonestudent}, when:  

\begin{itemize}
    \item[\textbf{(1)}] the approximation error of the  teacher is smaller than that of the learner ($\epsilon_\phi \leq \epsilon_h$), hence the teacher is useful (A1); 
    \item[\textbf{(2)}] the learner can match the teacher ($\epsilon_s \approx 0$) with rate $\alpha_\phi>1/2$ such that $\min\{\beta,\alpha_\phi\} > 1/2$ (A2); and  \item[\textbf{(3)}] the capacity of the teacher class ($|{\Phi}|_{\text{VC}}$) is small. 
\end{itemize}

%% file: 05extraexp.tex
\subsection{Experiment Setup }
\label{app:setup}

\textbf{Dataset Description:} 
To evaluate different models on {in-distribution} datasets, we generate data from five distinct priors. For each prior, we sample 250 datasets by uniformly drawing the dimensionality from $[2,100]$ and randomly sampling other prior-specific variables, such as the number of GMM components, SCM DAG widths and depths, and the dimensions perturbed to create outliers. Since both the 2-layer and 10-layer models are trained with a maximum context length of 5{,}000, we restrict all evaluation datasets to have at most 5{,}000 context points. Outliers constitute $5\%$--$20\%$ of the test set. For the 2-layer models and their variants, we use the GMM subset for in-distribution evaluation. For the 10-layer models, we evaluate on the full mixed-prior test set.

For {out-of-distribution} evaluation, we evaluate the 10-layer models on ADBench \cite{han2022adbench}, a widely used benchmark suite for tabular outlier detection that aggregates diverse real-world datasets across multiple domains and anomaly types.

\textbf{Training Configurations:~} We train the 2-layer model for 1,500 epochs on sequences of length 5,000, using 1,000 steps per epoch, batch size 2, and 4 v100-32G GPUs. The model uses a 256-dimensional embedding, 2 attention heads, hidden dimension 512, and 500 learned routing tokens, with the last layer also using routing attention. Optimization is performed with a peak learning rate of $10^{-4}$ with AdamW optimizer and cosine learning rate decay. For the enlarged model, the embedding dimension is increased from 256 to 512, the number of attention heads from 2 to 8, the hidden dimension from 512 to 1024, and the depth from 2 layers to 10 layers. This enlarged configuration substantially increases model capacity, enabling stronger sequence/context modeling  while keeping the overall training protocol unchanged (we train on 4 L40 GPUs).

\subsection{Comparing PI vs. No-PI TFMs across Training Epochs}
\label{app:pairedtests}

We compare \method, Silver Enc., and No-PI when they all reach the same loss as shown in Figure \ref{fig:main}.  Specifically, \method (transferred) and Silver Enc. (no transfer) reach No-PI's loss at convergence @1500 epochs respectively @900 and @750 epochs. 
Figure \ref{fig:ptest_10layers} presents the corresponding permutation test results, showing that Silver Enc@750 outperforms No-PI@1500, and that  \textsc{PIQL}@900 significantly outperforms both. 

Similarly, we compare  Gold@250, Silver@450, Bronze@1000, and No-PI@1500 epochs for 2-layer architecture when they all reach the same loss as shown in Figure \ref{fig:prelim}. 
Figure \ref{fig:ptest_2layers}
presents the corresponding permutation test results, showing that all privileged models significantly outperform the standard base model No-PI@1500 even at earlier epochs, with earlier gains observed as PI quality increases from bronze to gold. Moreover, generator-PI based Silver and Gold models are comparable at @450 and @250 epochs, respectively, both of which outperform Bronze @1000 epochs. 

\begin{figure}[!ht]
    \centering
    \begin{subfigure}[t]{0.48\linewidth}
        \centering
        \includegraphics[width=\linewidth]{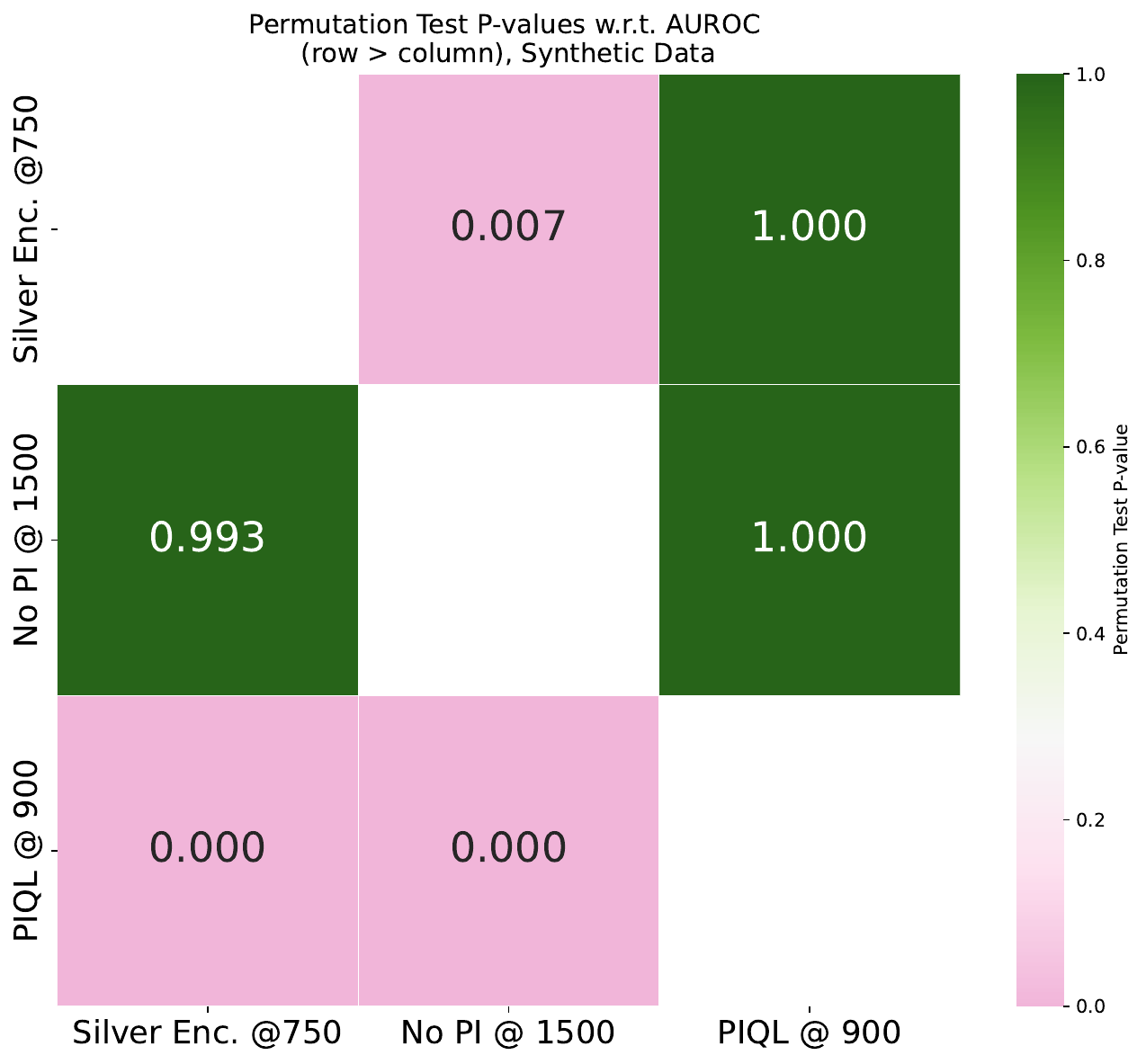}
        \caption{Paired permutation test of \textsc{PIQL} @900, Silver Enc. @750, and No-PI @1500 epochs for the 10-layer architecture (as shown in Fig.~\ref{fig:main}), taken at varying epochs as they reach the same loss. \textsc{PIQL}@900 significantly outperforms both Silver Enc@750 and No-PI@1500.}
        \label{fig:ptest_10layers}
    \end{subfigure}
    \hfill
    \begin{subfigure}[t]{0.48\linewidth}
        \centering
        \includegraphics[width=\linewidth]{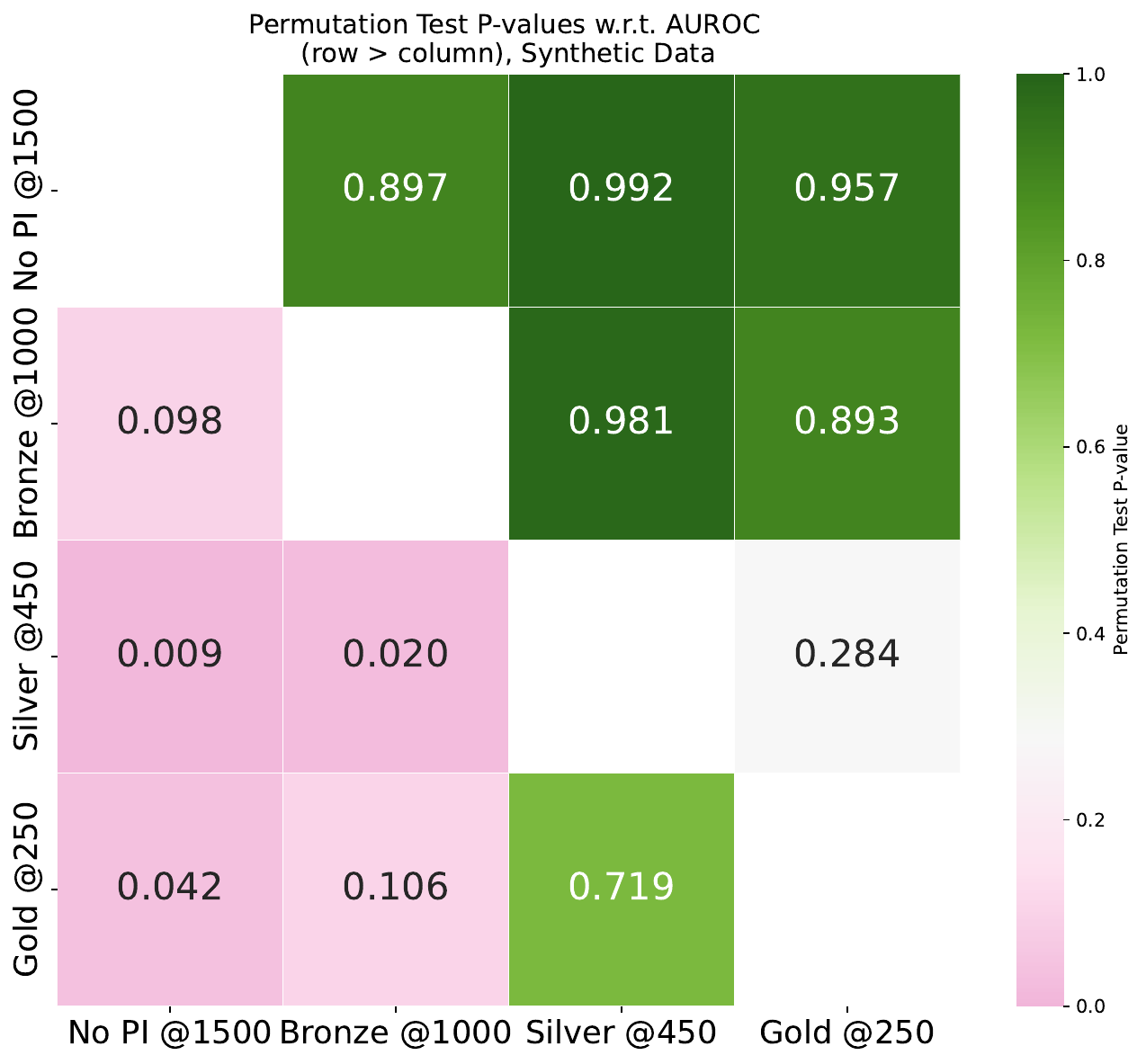}
        \caption{Paired permutation test of Gold @250, Silver @450, Bronze @1000, and No-PI @1500 epochs for the 2-layer architecture (as shown in Fig.~\ref{fig:prelim}), taken at varying epochs as they reach the same loss. All privileged models significantly outperform
  No-PI@1500 at much earlier epochs, improving from bronze to gold. }
        \label{fig:ptest_2layers}
    \end{subfigure}
    \caption{Pairwise permutation test results for the 10-layer and 2-layer models. }
    \label{fig:ptest_combined}
\end{figure}

\subsection{Individual Dataset Results in ADBench}
\label{ssec:adb}

Table \ref{tab:appendix_fulladbench_auroc} and Table \ref{tab:appendix_fulladbench_aucpr} show individual AUROC and AUPRC results on ADBench. Table \ref{tab:adbench_pval} shows the permutation test results between \method and the base model at varying epochs. \method shows statistically similar results to the standard base model at earlier epochs, exhibiting faster learning rate.

\begin{table}[h]
 \caption{Paired comparison between \method'ed TFM vs no-PI baseline across varying epochs based on $p$-value from permutation test on ADBench. $p>0.05$ shows no statistically significant difference, indicating that \method enables TFMs to match the no-PI baseline performance at earlier epochs.}
    \centering
    \vspace{0.05in}
   \begin{tabular}{lcc}
\toprule
 & \textsc{PIQL}@650 & \textsc{PIQL}@700 \\
\midrule
No-PI@920  & 0.299 & 0.084 \\
No-PI@950  & 0.497 & 0.204 \\
No-PI@1000 & 0.833 & 0.390 \\
No-PI@1050 & 0.438 & 0.119 \\
\bottomrule
\end{tabular}
\label{tab:adbench_pval}
\end{table}

\subsection{LLM-generated Meta-features for Meta-PI++}
\label{ssec:llmmeta}

For Meta-PI++, we prompt a LLM (ChatGPT) to describe 91 additional summary statistics for any dataset besides the 9 existing ones: mean, variance, skewness, kurtosis, q10, q25, q50, q75, and q90. LLM provides the additional statistics as grouped into seven categories as shown in Table \ref{tab:additional_stats}: 39 fine-grained quantile features describing the marginal distribution, 15 robust spread measures capturing variability and interval widths, 8 extreme-value statistics summarizing minima, maxima, ranges, and absolute magnitudes, 8 higher-moment statistics characterizing distributional shape beyond skewness and kurtosis, 12 tail/outlier proportion features measuring mass in extreme regions, 6 sign/zero/finite-value indicators capturing numerical and sparsity patterns, and 3 asymmetry measures quantifying skewness and mean–median imbalance. We also work with the LLM to generate the code that produces the described meta-features. 

Table \ref{tab:additional_stats} presents the list and the detailed description of the additional dataset-level statistics we use as meta-features in Meta-PI++.

\begin{table}[!t]
\hspace{-0.2in}
\centering
\caption{List of the additional LLM-generated meta-features for Meta-PI++.}
\label{tab:additional_stats}
\small
\begin{tabular}{p{0.2\linewidth}p{0.06\linewidth}p{0.65\linewidth}}
\toprule
\textbf{Category} & \textbf{Count} & \textbf{Additional features} \\
\midrule
Extra quantiles 
& 39 
& $q_{0.001}$, $q_{0.01}$, $q_{0.025}$, $q_{0.05}$, $q_{0.075}$, $q_{0.125}$, $q_{0.15}$, $q_{0.175}$, $q_{0.20}$, $q_{0.225}$, $q_{0.275}$, $q_{0.30}$, $q_{0.325}$, $q_{0.35}$, $q_{0.375}$, $q_{0.40}$, $q_{0.425}$, $q_{0.45}$, $q_{0.475}$, $q_{0.525}$, $q_{0.55}$, $q_{0.575}$, $q_{0.60}$, $q_{0.625}$, $q_{0.65}$, $q_{0.675}$, $q_{0.70}$, $q_{0.725}$, $q_{0.775}$, $q_{0.80}$, $q_{0.825}$, $q_{0.85}$, $q_{0.875}$, $q_{0.925}$, $q_{0.95}$, $q_{0.975}$, $q_{0.99}$, $q_{0.995}$, $q_{0.999}$ \\
\midrule
Robust spread / interval widths 
& 15 
& IQR, 90\% range, 80\% range, 50\% range, lower-tail width $(q_{0.25}-q_{0.10})$, upper-tail width $(q_{0.90}-q_{0.75})$, lower extreme-tail width $(q_{0.10}-q_{0.01})$, upper extreme-tail width $(q_{0.99}-q_{0.90})$, median absolute deviation, mean absolute deviation, std-to-mean ratio, IQR-to-std ratio, range-to-std ratio, $q_{0.90}/|q_{0.10}|$, $q_{0.75}/|q_{0.25}|$ \\
\midrule
Min/max/extreme statistics 
& 8 
& Minimum, maximum, range, absolute minimum, absolute maximum, mean absolute value, median absolute value, RMS \\
\midrule
Higher moments 
& 8 
& 5th standardized moment, 6th standardized moment, 2nd raw moment, 3rd raw moment, 4th raw moment, 2nd central moment, 3rd central moment, 4th central moment \\
\midrule
Tail / outlier proportions 
& 12 
& Fraction below $q_{0.01}$, below $q_{0.05}$, below $q_{0.10}$, above $q_{0.90}$, above $q_{0.95}$, above $q_{0.99}$, fraction with $|x-\mu|>\sigma$, $|x-\mu|>2\sigma$, $|x-\mu|>3\sigma$, fraction below lower IQR bound, above upper IQR bound, total IQR outlier fraction \\
\midrule
Sign / zero / finite behavior 
& 6 
& Fraction positive, fraction negative, fraction zero, fraction NaN, fraction Inf, fraction finite \\
\midrule
Shape asymmetry statistics 
& 3 
& Bowley skewness, tail asymmetry, normalized mean--median gap \\
\midrule
\textbf{Total} & \textbf{91} & Additional per-dimension statistics \\
\bottomrule
\end{tabular}
\end{table}

\input{TAB/auroc_adbench}

%% file: TAB/auroc_adbench.tex
\begin{table}
  \centering
  \caption{Individual dataset performances on ADBench w.r.t. \textbf{AUROC} ($\pm$ standard dev. over five seeds).  \textcolor{blue}{\textbf{Best}} and \textcolor{green!50!black}{\underline{Second-best}} results are highlighted. \method reaches statistically similar performance to base model (no-PI) at earlier epochs (see Table \ref{tab:adbench_pval}) }
\resizebox{0.95\textwidth}{!}{\begin{tabular}{l|llllll}
\toprule
Dataset & PIQL@650 & PIQL@700 &  No-PI@920 & No-PI@950 & No-PI@1000 & No-PI@1050\\ \midrule 
agnews & \pmv{60.26}{1.46} & \pmv{62.85}{0.82} & \pmv{63.50}{2.72} & \pmv{64.58}{2.42} & \pmv{\secondbest{65.74}}{2.02} & \pmv{\best{65.82}}{2.37} \\
20news & \pmv{57.12}{1.25} & \pmv{58.91}{2.34} & \pmv{59.50}{1.96} & \pmv{\secondbest{60.98}}{1.91} & \pmv{60.69}{1.57} & \pmv{\best{61.08}}{1.74} \\
SVHN & \pmv{58.57}{2.39} & \pmv{58.57}{3.41} & \pmv{59.76}{1.58} & \pmv{59.73}{1.46} & \pmv{\secondbest{60.24}}{1.39} & \pmv{\best{61.19}}{1.04} \\
FashionMNIST & \pmv{81.87}{4.72} & \pmv{80.93}{7.20} & \pmv{84.55}{3.41} & \pmv{84.69}{3.15} & \pmv{\secondbest{85.61}}{2.28} & \pmv{\best{86.72}}{0.92} \\
MNIST-C & \pmv{76.43}{4.71} & \pmv{77.15}{4.07} & \pmv{78.33}{1.26} & \pmv{78.35}{2.00} & \pmv{\secondbest{79.05}}{2.38} & \pmv{\best{80.43}}{1.97} \\
CIFAR10 & \pmv{64.42}{1.59} & \pmv{64.26}{2.05} & \pmv{64.78}{1.78} & \pmv{65.09}{1.72} & \pmv{\secondbest{65.52}}{1.33} & \pmv{\best{65.96}}{1.02} \\
MVTec-AD & \pmv{82.48}{0.55} & \pmv{82.59}{0.58} & \pmv{84.09}{1.06} & \pmv{84.65}{1.14} & \pmv{\best{84.91}}{0.80} & \pmv{\secondbest{84.89}}{1.02} \\
wilt & \pmv{91.79}{0.00} & \pmv{\secondbest{93.83}}{0.00} & \pmv{90.21}{0.00} & \pmv{93.70}{0.00} & \pmv{\best{94.18}}{0.00} & \pmv{91.96}{0.00} \\
mnist & \pmv{67.89}{0.00} & \pmv{81.80}{0.00} & \pmv{\secondbest{84.57}}{0.00} & \pmv{82.76}{0.00} & \pmv{\best{84.67}}{0.00} & \pmv{80.01}{0.00} \\
hepatitis & \pmv{99.89}{0.22} & \pmv{99.89}{0.22} & \pmv{\secondbest{99.92}}{0.17} & \pmv{99.90}{0.19} & \pmv{99.39}{0.84} & \pmv{\best{99.92}}{0.17} \\
yelp & \pmv{58.59}{8.74} & \pmv{58.27}{6.25} & \pmv{60.72}{6.06} & \pmv{61.34}{5.76} & \pmv{\secondbest{62.21}}{3.79} & \pmv{\best{62.81}}{3.68} \\
spambase & \pmv{80.26}{0.00} & \pmv{\best{83.44}}{0.00} & \pmv{83.05}{0.00} & \pmv{\secondbest{83.18}}{0.00} & \pmv{82.98}{0.00} & \pmv{81.74}{0.00} \\
smtp & \pmv{91.50}{2.22} & \pmv{\secondbest{94.79}}{1.36} & \pmv{91.84}{2.32} & \pmv{91.07}{3.04} & \pmv{\best{95.31}}{0.63} & \pmv{87.39}{4.10} \\
speech & \pmv{\secondbest{40.30}}{3.18} & \pmv{\best{40.79}}{2.61} & \pmv{40.02}{2.99} & \pmv{40.11}{1.90} & \pmv{40.16}{2.01} & \pmv{39.13}{1.81} \\
pima & \pmv{80.17}{1.33} & \pmv{80.29}{1.12} & \pmv{79.39}{1.44} & \pmv{79.73}{1.57} & \pmv{\secondbest{80.78}}{1.69} & \pmv{\best{81.18}}{1.08} \\
internetads & \pmv{50.92}{9.90} & \pmv{54.62}{4.28} & \pmv{54.95}{5.28} & \pmv{54.01}{6.64} & \pmv{\secondbest{55.37}}{4.73} & \pmv{\best{55.84}}{3.30} \\
stamps & \pmv{\best{98.75}}{0.55} & \pmv{98.17}{0.49} & \pmv{97.81}{0.77} & \pmv{98.33}{0.52} & \pmv{\secondbest{98.57}}{0.62} & \pmv{97.56}{0.95} \\
cardiotocography & \pmv{\best{62.79}}{0.00} & \pmv{60.18}{0.00} & \pmv{59.82}{0.00} & \pmv{\secondbest{62.62}}{0.00} & \pmv{59.58}{0.00} & \pmv{61.46}{0.00} \\
imdb & \pmv{\best{49.85}}{3.42} & \pmv{48.06}{3.26} & \pmv{48.75}{3.80} & \pmv{48.33}{3.61} & \pmv{47.94}{4.01} & \pmv{\secondbest{49.36}}{4.66} \\
aloi & \pmv{51.97}{0.46} & \pmv{51.22}{0.69} & \pmv{52.01}{0.70} & \pmv{51.92}{0.38} & \pmv{\best{53.49}}{1.16} & \pmv{\secondbest{52.03}}{0.24} \\
wpbc & \pmv{91.24}{1.94} & \pmv{89.82}{1.38} & \pmv{\secondbest{92.35}}{0.82} & \pmv{\best{93.03}}{1.18} & \pmv{90.08}{0.76} & \pmv{90.71}{1.98} \\
yeast & \pmv{47.56}{0.00} & \pmv{\best{51.58}}{0.00} & \pmv{48.47}{0.00} & \pmv{49.46}{0.00} & \pmv{\secondbest{49.48}}{0.00} & \pmv{49.02}{0.00} \\
fraud & \pmv{95.23}{0.75} & \pmv{95.33}{0.84} & \pmv{95.73}{0.56} & \pmv{\best{96.11}}{0.76} & \pmv{\secondbest{95.90}}{0.88} & \pmv{95.67}{1.12} \\
musk & \pmv{\secondbest{99.99}}{0.01} & \pmv{\best{99.99}}{0.01} & \pmv{99.96}{0.07} & \pmv{99.98}{0.02} & \pmv{99.96}{0.02} & \pmv{99.97}{0.04} \\
fault & \pmv{54.31}{0.00} & \pmv{\best{57.44}}{0.00} & \pmv{55.35}{0.00} & \pmv{\secondbest{56.90}}{0.00} & \pmv{55.50}{0.00} & \pmv{55.21}{0.00} \\
amazon & \pmv{52.69}{5.17} & \pmv{52.75}{4.77} & \pmv{55.18}{4.08} & \pmv{55.17}{4.02} & \pmv{\secondbest{55.90}}{2.97} & \pmv{\best{56.70}}{1.57} \\
campaign & \pmv{63.31}{1.39} & \pmv{\secondbest{72.99}}{0.88} & \pmv{71.51}{1.26} & \pmv{66.73}{1.33} & \pmv{72.31}{1.40} & \pmv{\best{73.29}}{1.11} \\
breastw & \pmv{\best{98.81}}{0.27} & \pmv{\secondbest{98.69}}{0.32} & \pmv{98.42}{0.58} & \pmv{97.47}{1.04} & \pmv{96.82}{0.99} & \pmv{97.23}{0.97} \\
glass & \pmv{98.15}{0.91} & \pmv{\secondbest{98.70}}{0.96} & \pmv{97.80}{0.95} & \pmv{95.29}{3.86} & \pmv{98.15}{1.30} & \pmv{\best{98.72}}{1.12} \\
thyroid & \pmv{\best{98.40}}{0.00} & \pmv{\secondbest{98.23}}{0.00} & \pmv{96.64}{0.00} & \pmv{97.40}{0.00} & \pmv{95.56}{0.00} & \pmv{96.50}{0.00} \\
http & \pmv{99.97}{0.02} & \pmv{\best{99.99}}{0.01} & \pmv{99.92}{0.13} & \pmv{99.98}{0.02} & \pmv{99.96}{0.03} & \pmv{\secondbest{99.98}}{0.02} \\
optdigits & \pmv{82.01}{0.00} & \pmv{75.13}{0.00} & \pmv{\secondbest{82.78}}{0.00} & \pmv{72.82}{0.00} & \pmv{82.54}{0.00} & \pmv{\best{85.83}}{0.00} \\
skin & \pmv{82.43}{2.82} & \pmv{\best{86.35}}{1.92} & \pmv{52.70}{3.25} & \pmv{62.11}{6.90} & \pmv{\secondbest{86.20}}{2.63} & \pmv{72.12}{4.32} \\
backdoor & \pmv{\best{61.67}}{16.45} & \pmv{\secondbest{60.50}}{9.22} & \pmv{54.49}{19.27} & \pmv{55.44}{19.54} & \pmv{57.21}{21.43} & \pmv{46.53}{9.69} \\
letter & \pmv{39.22}{0.00} & \pmv{\secondbest{40.28}}{0.00} & \pmv{\best{40.61}}{0.00} & \pmv{39.46}{0.00} & \pmv{39.00}{0.00} & \pmv{39.15}{0.00} \\
vertebral & \pmv{79.61}{2.98} & \pmv{81.20}{1.64} & \pmv{\best{82.07}}{2.69} & \pmv{\secondbest{81.74}}{2.86} & \pmv{77.37}{5.43} & \pmv{70.07}{4.12} \\
satimage-2 & \pmv{\best{99.62}}{0.00} & \pmv{99.24}{0.00} & \pmv{99.14}{0.00} & \pmv{99.08}{0.00} & \pmv{98.84}{0.00} & \pmv{\secondbest{99.24}}{0.00} \\
wdbc & \pmv{\secondbest{99.17}}{0.26} & \pmv{98.85}{0.47} & \pmv{99.11}{0.26} & \pmv{\best{99.18}}{0.20} & \pmv{98.82}{0.38} & \pmv{98.91}{0.45} \\
celeba & \pmv{\secondbest{69.40}}{0.77} & \pmv{\best{72.92}}{0.63} & \pmv{63.31}{1.18} & \pmv{66.84}{0.99} & \pmv{69.32}{1.02} & \pmv{63.96}{1.32} \\
landsat & \pmv{55.54}{0.00} & \pmv{\best{59.61}}{0.00} & \pmv{54.23}{0.00} & \pmv{49.45}{0.00} & \pmv{50.77}{0.00} & \pmv{\secondbest{58.14}}{0.00} \\
vowels & \pmv{86.59}{0.00} & \pmv{85.31}{0.00} & \pmv{82.62}{0.00} & \pmv{86.45}{0.00} & \pmv{\best{86.73}}{0.00} & \pmv{\secondbest{86.61}}{0.00} \\
shuttle & \pmv{99.38}{0.30} & \pmv{99.29}{0.40} & \pmv{99.39}{0.22} & \pmv{\best{99.60}}{0.15} & \pmv{\secondbest{99.54}}{0.22} & \pmv{99.45}{0.42} \\
wine & \pmv{99.91}{0.17} & \pmv{99.93}{0.14} & \pmv{99.92}{0.17} & \pmv{\secondbest{99.94}}{0.12} & \pmv{99.89}{0.21} & \pmv{\best{99.94}}{0.12} \\
annthyroid & \pmv{77.41}{0.00} & \pmv{81.51}{0.00} & \pmv{85.14}{0.00} & \pmv{\secondbest{85.66}}{0.00} & \pmv{\best{86.95}}{0.00} & \pmv{81.47}{0.00} \\
census & \pmv{60.92}{3.88} & \pmv{58.97}{4.10} & \pmv{61.43}{5.70} & \pmv{\best{62.53}}{3.41} & \pmv{\secondbest{62.47}}{4.34} & \pmv{61.16}{6.26} \\
waveform & \pmv{70.61}{0.00} & \pmv{71.63}{0.00} & \pmv{\secondbest{72.35}}{0.00} & \pmv{\best{72.88}}{0.00} & \pmv{71.40}{0.00} & \pmv{70.99}{0.00} \\
cardio & \pmv{\best{92.31}}{0.00} & \pmv{91.29}{0.00} & \pmv{90.97}{0.00} & \pmv{\secondbest{91.88}}{0.00} & \pmv{90.58}{0.00} & \pmv{90.89}{0.00} \\
wbc & \pmv{\secondbest{99.44}}{0.74} & \pmv{\best{99.54}}{0.61} & \pmv{99.25}{0.83} & \pmv{99.07}{0.99} & \pmv{98.97}{1.13} & \pmv{99.23}{0.91} \\
ionosphere & \pmv{98.05}{0.81} & \pmv{98.27}{0.82} & \pmv{98.34}{0.58} & \pmv{\best{98.69}}{0.45} & \pmv{98.38}{0.60} & \pmv{\secondbest{98.54}}{0.53} \\
cover & \pmv{99.60}{0.09} & \pmv{99.54}{0.11} & \pmv{99.51}{0.13} & \pmv{99.60}{0.10} & \pmv{\secondbest{99.66}}{0.10} & \pmv{\best{99.66}}{0.10} \\
pageblocks & \pmv{\secondbest{88.29}}{0.00} & \pmv{87.85}{0.00} & \pmv{86.82}{0.00} & \pmv{88.23}{0.00} & \pmv{87.50}{0.00} & \pmv{\best{88.73}}{0.00} \\
pendigits & \pmv{98.83}{0.00} & \pmv{99.07}{0.00} & \pmv{\best{99.52}}{0.00} & \pmv{99.40}{0.00} & \pmv{99.41}{0.00} & \pmv{\secondbest{99.51}}{0.00} \\
satellite & \pmv{74.96}{0.00} & \pmv{\secondbest{75.59}}{0.00} & \pmv{73.58}{0.00} & \pmv{73.31}{0.00} & \pmv{73.66}{0.00} & \pmv{\best{76.05}}{0.00} \\
magic.gamma & \pmv{85.43}{0.14} & \pmv{86.70}{0.12} & \pmv{\secondbest{86.96}}{0.21} & \pmv{\best{88.99}}{0.19} & \pmv{84.12}{0.22} & \pmv{86.64}{0.10} \\
donors & \pmv{\best{99.72}}{0.06} & \pmv{\secondbest{99.70}}{0.03} & \pmv{99.31}{0.50} & \pmv{99.61}{0.04} & \pmv{98.95}{0.93} & \pmv{99.29}{0.28} \\
lymphography & \pmv{99.98}{0.04} & \pmv{\secondbest{99.98}}{0.04} & \pmv{\best{99.98}}{0.04} & \pmv{99.93}{0.09} & \pmv{99.93}{0.09} & \pmv{99.96}{0.05} \\
mammography & \pmv{83.22}{4.02} & \pmv{\secondbest{86.31}}{1.70} & \pmv{83.69}{1.14} & \pmv{\best{88.41}}{0.59} & \pmv{65.52}{1.13} & \pmv{77.80}{1.78} \\
\bottomrule 
    \end{tabular}}  \label{tab:appendix_fulladbench_auroc}
  \end{table}

 \begin{table}
  \centering
  \caption{Individual dataset performances on ADBench w.r.t. \textbf{AUPRC} ($\pm$ standard dev. over five seeds).  \textcolor{blue}{\textbf{Best}} and \textcolor{green!50!black}{\underline{Second-best}} results are highlighted. \method reaches statistically similar performance to base model (no-PI) at earlier epochs (see Table \ref{tab:adbench_pval}).}
\resizebox{0.95\textwidth}{!}{\begin{tabular}{l|llllll}
\toprule
Dataset & PIQL@650 & PIQL@700 &  No-PI@920 & No-PI@950 & No-PI@1000 & No-PI@1050\\ \midrule 
agnews & \pmv{14.96}{0.83} & \pmv{16.41}{0.51} & \pmv{16.86}{1.90} & \pmv{17.20}{1.41} & \pmv{\best{17.64}}{1.38} & \pmv{\secondbest{17.63}}{1.56} \\
20news & \pmv{13.87}{0.53} & \pmv{14.55}{1.27} & \pmv{14.75}{0.57} & \pmv{\best{16.08}}{0.89} & \pmv{\secondbest{15.33}}{0.53} & \pmv{15.13}{0.60} \\
SVHN & \pmv{13.57}{1.14} & \pmv{13.68}{1.61} & \pmv{14.33}{0.84} & \pmv{14.24}{0.79} & \pmv{\secondbest{14.44}}{0.84} & \pmv{\best{14.85}}{0.63} \\
FashionMNIST & \pmv{44.32}{7.98} & \pmv{44.71}{11.42} & \pmv{49.68}{5.01} & \pmv{\best{50.42}}{5.66} & \pmv{48.92}{7.28} & \pmv{\secondbest{50.38}}{4.22} \\
MNIST-C & \pmv{38.40}{7.07} & \pmv{38.86}{6.82} & \pmv{\secondbest{41.87}}{2.17} & \pmv{41.64}{3.09} & \pmv{41.61}{4.97} & \pmv{\best{44.58}}{1.99} \\
CIFAR10 & \pmv{17.69}{1.03} & \pmv{17.61}{1.62} & \pmv{18.12}{1.21} & \pmv{18.36}{1.23} & \pmv{\secondbest{18.53}}{1.29} & \pmv{\best{18.84}}{0.89} \\
MVTec-AD & \pmv{79.09}{1.02} & \pmv{78.98}{1.13} & \pmv{80.01}{1.21} & \pmv{80.81}{1.24} & \pmv{\secondbest{81.00}}{0.95} & \pmv{\best{81.33}}{1.01} \\
wilt & \pmv{46.61}{0.00} & \pmv{55.84}{0.00} & \pmv{42.02}{0.00} & \pmv{\secondbest{59.61}}{0.00} & \pmv{\best{59.82}}{0.00} & \pmv{50.02}{0.00} \\
mnist & \pmv{44.25}{0.00} & \pmv{51.35}{0.00} & \pmv{\secondbest{58.19}}{0.00} & \pmv{55.11}{0.00} & \pmv{\best{58.69}}{0.00} & \pmv{56.72}{0.00} \\
hepatitis & \pmv{99.57}{0.86} & \pmv{99.57}{0.86} & \pmv{\secondbest{99.71}}{0.57} & \pmv{99.65}{0.71} & \pmv{98.82}{1.39} & \pmv{\best{99.72}}{0.57} \\
yelp & \pmv{13.61}{3.77} & \pmv{12.97}{2.17} & \pmv{\best{14.22}}{2.26} & \pmv{13.72}{2.03} & \pmv{14.01}{1.57} & \pmv{\secondbest{14.11}}{1.26} \\
spambase & \pmv{82.05}{0.00} & \pmv{\secondbest{82.15}}{0.00} & \pmv{\best{82.46}}{0.00} & \pmv{82.08}{0.00} & \pmv{81.69}{0.00} & \pmv{81.31}{0.00} \\
smtp & \pmv{\best{47.97}}{14.45} & \pmv{\secondbest{43.34}}{13.02} & \pmv{24.10}{19.03} & \pmv{37.56}{9.13} & \pmv{42.13}{11.71} & \pmv{28.82}{8.15} \\
speech & \pmv{2.73}{0.30} & \pmv{2.73}{0.21} & \pmv{2.74}{0.29} & \pmv{\secondbest{2.78}}{0.30} & \pmv{\best{2.82}}{0.34} & \pmv{2.69}{0.24} \\
pima & \pmv{79.29}{1.94} & \pmv{\secondbest{79.71}}{1.85} & \pmv{78.59}{2.23} & \pmv{78.98}{1.93} & \pmv{79.62}{2.21} & \pmv{\best{80.01}}{1.94} \\
internetads & \pmv{38.04}{8.26} & \pmv{\secondbest{40.88}}{7.68} & \pmv{40.24}{7.24} & \pmv{39.38}{6.67} & \pmv{40.53}{6.37} & \pmv{\best{41.23}}{5.88} \\
stamps & \pmv{\best{90.56}}{5.29} & \pmv{87.77}{4.82} & \pmv{86.90}{4.72} & \pmv{89.04}{4.64} & \pmv{\secondbest{89.79}}{5.28} & \pmv{88.80}{4.41} \\
cardiotocography & \pmv{\best{57.56}}{0.00} & \pmv{55.09}{0.00} & \pmv{55.03}{0.00} & \pmv{\secondbest{56.98}}{0.00} & \pmv{54.42}{0.00} & \pmv{56.37}{0.00} \\
imdb & \pmv{\best{9.46}}{1.12} & \pmv{8.94}{0.76} & \pmv{9.16}{1.06} & \pmv{9.07}{0.97} & \pmv{8.97}{0.95} & \pmv{\secondbest{9.30}}{1.29} \\
aloi & \pmv{\best{6.68}}{0.24} & \pmv{6.34}{0.02} & \pmv{6.45}{0.07} & \pmv{6.36}{0.06} & \pmv{\secondbest{6.57}}{0.13} & \pmv{6.33}{0.03} \\
wpbc & \pmv{80.84}{4.45} & \pmv{79.70}{3.98} & \pmv{\secondbest{83.11}}{3.74} & \pmv{\best{83.37}}{4.06} & \pmv{81.31}{3.55} & \pmv{81.22}{4.34} \\
yeast & \pmv{50.52}{0.00} & \pmv{\best{52.69}}{0.00} & \pmv{51.56}{0.00} & \pmv{\secondbest{52.24}}{0.00} & \pmv{52.13}{0.00} & \pmv{51.68}{0.00} \\
fraud & \pmv{42.27}{11.92} & \pmv{37.40}{10.47} & \pmv{\secondbest{45.84}}{12.15} & \pmv{41.85}{13.52} & \pmv{42.54}{10.95} & \pmv{\best{48.73}}{14.70} \\
musk & \pmv{\best{99.88}}{0.18} & \pmv{\secondbest{99.77}}{0.24} & \pmv{99.45}{0.89} & \pmv{99.65}{0.34} & \pmv{99.50}{0.30} & \pmv{99.62}{0.59} \\
fault & \pmv{61.22}{0.00} & \pmv{\secondbest{62.34}}{0.00} & \pmv{61.30}{0.00} & \pmv{\best{62.53}}{0.00} & \pmv{61.04}{0.00} & \pmv{61.64}{0.00} \\
amazon & \pmv{10.38}{1.46} & \pmv{10.32}{1.23} & \pmv{10.97}{1.10} & \pmv{10.87}{1.06} & \pmv{\secondbest{11.00}}{0.77} & \pmv{\best{11.31}}{0.40} \\
campaign & \pmv{38.62}{1.02} & \pmv{43.33}{0.87} & \pmv{43.46}{1.03} & \pmv{40.21}{0.94} & \pmv{\secondbest{43.52}}{1.51} & \pmv{\best{44.73}}{0.90} \\
breastw & \pmv{\best{98.45}}{0.30} & \pmv{\secondbest{98.09}}{0.49} & \pmv{97.60}{0.93} & \pmv{96.02}{1.27} & \pmv{94.94}{1.39} & \pmv{95.27}{0.96} \\
glass & \pmv{77.35}{10.46} & \pmv{\secondbest{81.92}}{9.60} & \pmv{73.11}{12.12} & \pmv{71.94}{13.48} & \pmv{76.07}{13.64} & \pmv{\best{82.38}}{11.50} \\
thyroid & \pmv{\best{72.20}}{0.00} & \pmv{\secondbest{69.61}}{0.00} & \pmv{57.88}{0.00} & \pmv{66.19}{0.00} & \pmv{52.60}{0.00} & \pmv{64.09}{0.00} \\
http & \pmv{97.63}{2.74} & \pmv{\best{99.16}}{1.21} & \pmv{\secondbest{97.76}}{2.35} & \pmv{97.47}{2.85} & \pmv{94.19}{4.19} & \pmv{97.33}{2.80} \\
optdigits & \pmv{15.23}{0.00} & \pmv{11.88}{0.00} & \pmv{16.20}{0.00} & \pmv{11.78}{0.00} & \pmv{\secondbest{16.25}}{0.00} & \pmv{\best{19.30}}{0.00} \\
skin & \pmv{64.06}{4.28} & \pmv{\secondbest{76.97}}{2.24} & \pmv{43.78}{2.67} & \pmv{46.52}{8.07} & \pmv{\best{77.78}}{2.91} & \pmv{59.51}{3.77} \\
backdoor & \pmv{\secondbest{9.84}}{3.74} & \pmv{9.55}{1.40} & \pmv{8.10}{4.26} & \pmv{8.77}{3.01} & \pmv{\best{10.45}}{5.42} & \pmv{7.07}{2.17} \\
letter & \pmv{9.06}{0.00} & \pmv{\secondbest{9.19}}{0.00} & \pmv{\best{9.25}}{0.00} & \pmv{9.10}{0.00} & \pmv{9.04}{0.00} & \pmv{9.08}{0.00} \\
vertebral & \pmv{51.54}{6.33} & \pmv{52.63}{4.30} & \pmv{\secondbest{56.97}}{6.83} & \pmv{\best{58.91}}{5.66} & \pmv{51.50}{8.45} & \pmv{47.63}{5.89} \\
satimage-2 & \pmv{88.35}{0.00} & \pmv{\best{90.05}}{0.00} & \pmv{\secondbest{89.58}}{0.00} & \pmv{87.33}{0.00} & \pmv{82.36}{0.00} & \pmv{86.39}{0.00} \\
wdbc & \pmv{\best{82.54}}{4.97} & \pmv{74.76}{9.40} & \pmv{77.10}{5.60} & \pmv{\secondbest{78.85}}{5.57} & \pmv{70.67}{8.13} & \pmv{75.09}{6.50} \\
celeba & \pmv{\secondbest{8.05}}{0.55} & \pmv{\best{8.85}}{0.64} & \pmv{6.66}{0.57} & \pmv{7.61}{0.59} & \pmv{7.98}{0.73} & \pmv{6.42}{0.51} \\
landsat & \pmv{38.72}{0.00} & \pmv{\best{43.29}}{0.00} & \pmv{37.26}{0.00} & \pmv{33.92}{0.00} & \pmv{35.48}{0.00} & \pmv{\secondbest{41.83}}{0.00} \\
vowels & \pmv{42.19}{0.00} & \pmv{42.20}{0.00} & \pmv{36.20}{0.00} & \pmv{41.82}{0.00} & \pmv{\secondbest{43.53}}{0.00} & \pmv{\best{46.26}}{0.00} \\
shuttle & \pmv{98.60}{0.46} & \pmv{98.53}{0.83} & \pmv{98.23}{0.59} & \pmv{\best{99.11}}{0.53} & \pmv{\secondbest{98.92}}{0.55} & \pmv{98.34}{1.24} \\
wine & \pmv{99.38}{1.24} & \pmv{99.61}{0.79} & \pmv{99.51}{0.97} & \pmv{\secondbest{99.62}}{0.77} & \pmv{99.40}{1.20} & \pmv{\best{99.62}}{0.77} \\
annthyroid & \pmv{49.51}{0.00} & \pmv{\secondbest{50.76}}{0.00} & \pmv{47.56}{0.00} & \pmv{\best{51.84}}{0.00} & \pmv{47.08}{0.00} & \pmv{47.77}{0.00} \\
census & \pmv{15.63}{1.38} & \pmv{14.96}{1.04} & \pmv{15.78}{2.57} & \pmv{\secondbest{16.01}}{1.21} & \pmv{\best{16.05}}{1.61} & \pmv{15.61}{2.01} \\
waveform & \pmv{11.92}{0.00} & \pmv{12.19}{0.00} & \pmv{\secondbest{12.87}}{0.00} & \pmv{\best{13.29}}{0.00} & \pmv{12.10}{0.00} & \pmv{11.56}{0.00} \\
cardio & \pmv{\best{74.78}}{0.00} & \pmv{71.32}{0.00} & \pmv{67.28}{0.00} & \pmv{71.60}{0.00} & \pmv{66.73}{0.00} & \pmv{\secondbest{72.01}}{0.00} \\
wbc & \pmv{\secondbest{93.52}}{9.07} & \pmv{\best{95.80}}{5.20} & \pmv{91.12}{9.26} & \pmv{87.00}{11.82} & \pmv{86.24}{12.59} & \pmv{90.95}{9.97} \\
ionosphere & \pmv{98.60}{0.48} & \pmv{98.79}{0.42} & \pmv{98.71}{0.37} & \pmv{\best{98.90}}{0.33} & \pmv{98.76}{0.33} & \pmv{\secondbest{98.84}}{0.32} \\
cover & \pmv{84.09}{2.78} & \pmv{81.47}{4.19} & \pmv{83.37}{2.73} & \pmv{\best{86.87}}{2.47} & \pmv{85.65}{2.43} & \pmv{\secondbest{86.56}}{2.91} \\
pageblocks & \pmv{\best{61.90}}{0.00} & \pmv{58.90}{0.00} & \pmv{53.05}{0.00} & \pmv{\secondbest{59.84}}{0.00} & \pmv{48.22}{0.00} & \pmv{56.96}{0.00} \\
pendigits & \pmv{76.32}{0.00} & \pmv{81.49}{0.00} & \pmv{\best{89.65}}{0.00} & \pmv{87.81}{0.00} & \pmv{87.87}{0.00} & \pmv{\secondbest{89.16}}{0.00} \\
satellite & \pmv{80.27}{0.00} & \pmv{\best{80.94}}{0.00} & \pmv{79.53}{0.00} & \pmv{79.09}{0.00} & \pmv{78.91}{0.00} & \pmv{\secondbest{80.91}}{0.00} \\
magic.gamma & \pmv{88.32}{0.15} & \pmv{89.09}{0.10} & \pmv{89.15}{0.15} & \pmv{\best{90.54}}{0.12} & \pmv{87.50}{0.20} & \pmv{\secondbest{89.21}}{0.09} \\
donors & \pmv{\best{94.22}}{1.05} & \pmv{\secondbest{93.84}}{0.85} & \pmv{90.01}{2.09} & \pmv{92.41}{1.20} & \pmv{88.46}{3.31} & \pmv{88.46}{1.80} \\
lymphography & \pmv{99.76}{0.49} & \pmv{\secondbest{99.80}}{0.41} & \pmv{\best{99.80}}{0.41} & \pmv{99.28}{0.96} & \pmv{99.28}{0.96} & \pmv{99.55}{0.55} \\
mammography & \pmv{37.81}{1.36} & \pmv{\secondbest{45.75}}{1.58} & \pmv{31.85}{1.20} & \pmv{\best{49.29}}{0.87} & \pmv{34.93}{1.41} & \pmv{39.61}{0.84} \\
\bottomrule 
    \end{tabular}}  \label{tab:appendix_fulladbench_aucpr}
  \end{table}